\DeclareMathOperator{\exx}{\mathbf{E}}
\DeclareMathOperator{\prr}{\mathbf{P}}
\DeclareMathOperator*{\argmin}{arg\,min}
\def\FF{\mathcal{F}}
\def\HH{\mathcal{H}}
\def\RR{\mathbb{R}}
\def\VV{\mathcal{V}}
\newcommand{\overbar}[1]{\mkern 1.5mu\overline{\mkern-1.5mu#1\mkern-1.5mu}\mkern 1.5mu}
\newcommand*{\defeq}{\mathrel{\vcenter{\baselineskip0.5ex \lineskiplimit0pt     
                     \hbox{\scriptsize.}\hbox{\scriptsize.}}}
                     =}
\newcommand{\term}[1]{\textcolor{BlueViolet}{\textit{{#1}}}}
\DeclareMathOperator{\bregbase}{D} 
\def\breg{\bregbase_{\Phi}} 
\def\cond{\,|\,} 
\def\algo{\mathcal{A}} 
\def\diameter{\Delta} 
\DeclareMathOperator{\dist}{dist} 
\DeclareMathOperator{\dom}{dom} 
\def\hbar{\overbar{h}} 
\def\hstar{h^{\ast}} 
\DeclareMathOperator{\idc}{I} 
\DeclareMathOperator{\loss}{L} 
\DeclareMathOperator{\proj}{\Pi} 
\DeclareMathOperator{\regret}{Regret} 
\DeclareMathOperator{\risk}{R} 
\def\smooth{\lambda} 
\def\strong{\kappa} 
\def\thres{c} 
\theoremstyle{definition} \newtheorem{defn}{Definition}
\theoremstyle{plain} 
\theoremstyle{plain} \newtheorem{thm}[defn]{Theorem}
\theoremstyle{plain} \newtheorem{lem}[defn]{Lemma}
\theoremstyle{plain} \newtheorem{cor}[defn]{Corollary}
\theoremstyle{remark} 
\theoremstyle{remark}
\def\namedlabel#1#2{\begingroup
    #2%
    \def\@currentlabel{#2}%
    \phantomsection\label{#1}\endgroup
}
\title{\textbf{Robust learning with anytime-guaranteed feedback}}
\author{
  Matthew J.~Holland\thanks{Please direct correspondence to \texttt{matthew-h@ar.sanken.osaka-u.ac.jp}.}\\
  Osaka University
}
\date{} 
\begin{document}

\maketitle

\begin{abstract}
Under data distributions which may be heavy-tailed, many stochastic gradient-based learning algorithms are driven by feedback queried at points with almost no performance guarantees on their own. Here we explore a modified ``anytime online-to-batch'' mechanism which for smooth objectives admits high-probability error bounds while requiring only lower-order moment bounds on the stochastic gradients. Using this conversion, we can derive a wide variety of ``anytime robust'' procedures, for which the task of performance analysis can be effectively reduced to regret control, meaning that existing regret bounds (for the bounded gradient case) can be robustified and leveraged in a straightforward manner. As a direct takeaway, we obtain an easily implemented stochastic gradient-based algorithm for which all queried points formally enjoy sub-Gaussian error bounds, and in practice show noteworthy gains on real-world data applications.
\end{abstract}

\tableofcontents

\section{Introduction}\label{sec:intro}

The ultimate goal of many learning tasks can be formulated as a minimization problem:
\begin{align}
\min_{h} \, \risk(h), \text{ s.t. } h \in \HH.
\end{align}
What characterizes this as a learning problem is that $\risk$ (henceforth called the \term{true objective}) is \emph{unknown} to the learner, who must choose from the hypothesis class $\HH$ a final candidate based only on incomplete and noisy (stochastic) feedback related to $\risk$ \citep{haussler1992a,vapnik1999NSLT}. One of the most ubiquitous and well-studied feedback mechanisms is the \term{stochastic gradient oracle} \citep{hazan2016OCO,nemirovsky1983a,shalev2012a}, in which the learner generates a sequence of candidates $(h_t)$ based on a sequence of random sub-gradients $(G_t)$, which are unbiased in the following sense:
\begin{align}\label{eqn:feedback_naive}
\exx\left[ G_t \cond G_{[t-1]} \right] \in \partial\risk(h_t), \text{ for all } t \geq 1.
\end{align}
Here $\partial\risk(h)$ denotes the sub-differential of $\risk$ evaluated at $h$, and we denote sub-sequences by $G_{[t]} \defeq (G_1,\ldots,G_{t})$.\footnote{More strictly speaking, for each $t$, this inclusion holds almost surely over the random draw of $G_{[t]}$, and the conditional expectation is that of $G_t$ conditioned on the sigma-algebra generated by $G_{[t-1]}$. See \citet[Ch.~5--6]{ash2000a} for additional background on probabilistic foundations.} Our problem of interest is that of efficiently minimizing $\risk(\cdot)$ over $\HH$ when the noisy feedback is \emph{potentially heavy-tailed}, i.e., for all steps $t$, it is unknown whether the distribution of $G_t$ is congenial in the sub-Gaussian sense, or heavy-tailed in the sense of having infinite or undefined higher-order moments \citep{chen2017a,holland2019c}. By ``efficiently,'' we mean procedures with performance guarantees (high-probability error bounds) on par with the case in which the learner knows \textit{a priori} that the feedback is sub-Gaussian \citep{devroye2016a,nazin2019a}.

Recently, notable progress has been made on this front, with a common theme of making principled modifications (e.g., truncation, data splitting + validation, etc.) to the the raw feedback $(G_t)$ before passing it to a more traditional stochastic gradient-based update, to achieve sub-Gaussian bounds while assuming just finite variance \citep{davis2019a,gorbunov2020a,nazin2019a}. Here we focus on two key limitations to the current state of the art: (a) many robust learning algorithms only have such guarantees when $\risk$ is \emph{strongly convex} \citep{chen2017a,davis2019a,holland2019c}; (b) without strong convexity, sub-Gaussian guarantees are unavailable for the iterates $(h_t)$ being queried in (\ref{eqn:feedback_naive}), only for a running average of these iterates \citep{gorbunov2020a,nazin2019a}. While there exist general-purpose ``anytime'' online-to-batch conversions to ensure that the points being queried have guarantees \citep{cutkosky2019a}, even the most refined conversions either require bounded gradients or are only in expectation \citep{joulani2020a}, meaning that under potentially heavy-tailed gradients, a direct anytime conversion based on existing results fails to achieve the desired guarantees.

In this paper, in order to address the issues described above, we introduce a modified mechanism for making the ``anytime'' conversion (Algorithm \ref{algo:anytime_robust}), which is both easy to implement and robust to the underlying data distribution. More concretely, assuming only that $\risk$ is convex and smooth, and that raw gradients have finite variance, we obtain martingale concentration guarantees for truncated gradients queried at a moving average (Lemma \ref{lem:grad_error_highprob}), which lets us reduce the problem of obtaining error bounds to that of regret control (section \ref{sec:anytime_robust}), substantially broadening the domain to which the anytime conversion of \citet{cutkosky2019a} can be applied. Regret control for online learning algorithms (under \emph{bounded} gradients) is a well-studied problem, and in section \ref{sec:anytime_algos} we show that existing well-known regret bounds can be readily modified to utilize the control offered by Lemma \ref{lem:grad_error_highprob}. In particular, we look at vanilla FTRL (Lemma \ref{lem:rftrl_regret}), mirror descent (Lemma \ref{lem:rsmd_regret}), and AO-FTRL (Theorem \ref{thm:raoftrl_excess_risk}), giving us ``anytime robust'' analogues to results given in expectation by \citet{joulani2020a}. As a natural takeaway, we obtain a stochastic gradient-based procedure (section \ref{sec:rsmd}) for which \emph{all queried points} have sub-Gaussian error bounds (Corollary \ref{cor:anytime_sgd}), a methodological improvement over the averaging scheme of \citet{nazin2019a}, which we also empirically demonstrate has substantial practical benefits (section \ref{sec:empirical}). Our results are stated with a high degree of generality (works on any reflexive Banach space), and taken together are suggestive of an appealing general-purpose learning strategy.

\section{Preliminaries}\label{sec:prelims}

\paragraph{The underlying space}

For the underlying hypothesis class $\HH$, we shall assume $\HH \subset \VV$, where $(\VV,\|\cdot\|)$ is a normed linear space. For any normed space $(\VV,\|\cdot\|)$, we will denote by $\VV^{\ast}$ the usual dual of $\VV$, namely the set of all continuous linear functionals on $\VV$. As is traditional, the norm for the dual is defined $\|f\|_{\ast} \defeq \sup\{a : |f(h)| \leq a\|v\|, v \in \VV \}$ for $f \in \VV^{\ast}$. We denote the distance from a point $v$ to a set $A \subset \VV$ by $\dist(v;A) \defeq \inf\{\|v-u\|: u \in A\}$. We use the notation $\langle \cdot, \cdot \rangle$ to represent the coupling function between $\VV$ and $\VV^{\ast}$, namely $\langle v^{\ast}, v \rangle \defeq v^{\ast}(v)$ for each $v^{\ast} \in \VV^{\ast}$ and all $v \in \VV$; when $\VV$ is a Hilbert space this coincides with the usual inner product. We denote the extended real line by $\overbar{\RR}$.

\paragraph{Convexity and smoothness}

We say that a function $f:\VV \to \overbar{\RR}$ is \term{convex} if for all $0 < \alpha < 1$ and $u,v \in \VV$, we have $f(\alpha u + (1-\alpha)v) \leq \alpha f(u) + (1-\alpha)f(v)$. The \term{effective domain} of $f$ is defined $\dom f \defeq \{u \in \VV: f(u) < \infty\}$. A convex function $f: \VV \to \overbar{\RR}$ is said to be \term{proper} if $-\infty < f$ and $\dom f \neq \emptyset$. For any proper convex function $f: \VV \to \overbar{\RR}$, the \term{sub-differential} of $f$ at $h \in \VV$ is $\partial f(h) \defeq \left\{ v^{\ast} \in \VV^{\ast}: f(u)-f(h) \geq \langle v^{\ast}, u-h \rangle, u \in \VV \right\}$. For readability, we will sometimes make statements involving multi-valued functions; for example, the statement ``$\langle \partial f(h), u \rangle = a$,'' is equivalent to the statement ``$\langle v^{\ast}, u \rangle = a$ for all $v^{\ast} \in \partial f(h)$.'' When we say a certain point $\hstar$ is a \term{stationary point} of $f$ on $\HH$, we mean that $\hstar \in \HH$ and $0 \in \partial f(\hstar)$. If the convex function $f$ happens to be (Gateaux) differentiable at some $h \in \HH$, then the sub-differential contains a unique element, $\partial f(h) = \{\nabla f(h)\}$, the \term{gradient} of $f$ at $h$. When we say that $f$ is \term{$\smooth$-smooth} on some open convex set $U \subset \VV$, we mean that $\|\nabla f(h)-\nabla f(h^{\prime})\|_{\ast} \leq \smooth\|h-h^{\prime}\|$ for all $h,h^{\prime} \in U$. For any sub-differentiable function $f$, we write $\bregbase_{f}(u;v) \defeq f(u)-f(v)-\langle \partial f(v), u-v \rangle$; when $f$ happens to be convex and differentiable, this becomes the usual \term{Bregman divergence} induced by $f$.

\paragraph{Miscellaneous notation}

For indexing purposes, we denote the set of all positive integers no greater than $k$ by $[k] \defeq \{1,\ldots,k\}$. We denote $\alpha_{1:t} \defeq \sum_{i=1}^{t}\alpha_i$ for any integer $t \geq 1$, using the convention $\alpha_{1:0} \defeq 0$ as needed. We also denote sub-sequences in a similar fashion, with $\alpha_{[t]} \defeq (\alpha_1,\ldots,\alpha_t)$; this applies not only to $(\alpha_t)$, but also $(h_t)$, $(G_t)$ and other sequences used throughout the paper. Indicator functions (i.e., Bernoulli random variables) are typically denoted as $\idc\{\texttt{event}\}$.

\paragraph{Anytime conversions}

As preparation, we start with almost no assumptions on the learning algorithm or feedback-generating process. Let $(h_t)$ be an arbitrary sequence of candidates, henceforth referred to as the \term{ancillary iterates}. Letting $(\alpha_t)$ be a sequence of positive weights, we consider the corresponding \term{main iterates} $(\hbar_t)$, defined for all $t \geq 1$ as
\begin{align}\label{eqn:hbar_defn}
\hbar_t = \texttt{Weighting}\left[ h_t; h_{[t-1]} \right] \defeq \frac{\sum_{i=1}^{t} \alpha_i h_i}{\alpha_{1:t}}.
\end{align}
As a starting point, we note that the excess error of the weighted main iterates can be expressed in a convenient fashion.
\begin{lem}[Anytime lemma]\label{lem:anytime_basic}
Let $\VV$ be a linear space, and let $\risk:\VV \to \overbar{\RR}$ be sub-differentiable. Let $(h_t)$ be an arbitrary sequence of $h_t \in \dom \risk$, and let $(\hbar_t)$ be generated via (\ref{eqn:hbar_defn}). Then we have
\begin{align*}
\risk(\hbar_T) - \risk(\hstar) = \frac{1}{\alpha_{1:T}} \left[ \sum_{t=1}^{T} \alpha_t \left[ \langle \partial\risk(\hbar_t), h_t - \hstar \rangle - \bregbase_{\risk}(\hstar;\hbar_t) \right] - \sum_{t=1}^{T-1}\alpha_{1:t}\bregbase_{\risk}(\hbar_t;\hbar_{t+1}) \right]
\end{align*}
for any reference point $\hstar \in \dom \risk$ and $T \geq 1$.
\end{lem}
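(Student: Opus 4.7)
\smallskip

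\noindent\textbf{Proof plan.} The plan is to telescope $\alpha_{1:T}\risk(\hbar_T)$ index by index and apply the Bregman identity $f(u)-f(v) = \langle \partial f(v), u-v\rangle + \bregbase_{f}(u;v)$ twice: once to handle consecutive main iterates $\hbar_{t-1},\hbar_t$, and once to handle the reference point $\hstar$. The only non-cosmetic algebraic input is a single identity linking the weighted average $\hbar_t$ to the ancillary point $h_t$, which is where the $\langle \partial\risk(\hbar_t), h_t-\hstar\rangle$ terms in the conclusion come from.

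\smallskip

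\noindent\textbf{Step 1: telescope.} First I would write
\begin{align*}
\alpha_{1:T}\risk(\hbar_T) \;=\; \sum_{t=1}^{T}\left[\alpha_{1:t}\risk(\hbar_t)-\alpha_{1:t-1}\risk(\hbar_{t-1})\right] \;=\; \sum_{t=1}^{T}\alpha_t\risk(\hbar_t) \;+\; \sum_{t=1}^{T}\alpha_{1:t-1}\left[\risk(\hbar_t)-\risk(\hbar_{t-1})\right],
\end{align*}
using $\alpha_{1:0}=0$ so that the $t=1$ bracketed term drops out (and $\hbar_0$ never actually appears). Then applying the Bregman identity at $v=\hbar_t$, $u=\hbar_{t-1}$ gives $\risk(\hbar_t)-\risk(\hbar_{t-1}) = \langle \partial\risk(\hbar_t),\hbar_t-\hbar_{t-1}\rangle - \bregbase_{\risk}(\hbar_{t-1};\hbar_t)$.

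\smallskip

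\noindent\textbf{Step 2: key algebraic identity and reindexing.} From $\alpha_{1:t}\hbar_t = \alpha_{1:t-1}\hbar_{t-1}+\alpha_t h_t$ I get the identity $\alpha_{1:t-1}(\hbar_t-\hbar_{t-1}) = \alpha_t(h_t-\hbar_t)$, which converts the inner-product contribution in Step 1 into $\sum_{t=1}^{T}\alpha_t\langle \partial\risk(\hbar_t),h_t-\hbar_t\rangle$. Shifting $s=t-1$ in the Bregman sum gives $\sum_{t=1}^{T}\alpha_{1:t-1}\bregbase_{\risk}(\hbar_{t-1};\hbar_t) = \sum_{t=1}^{T-1}\alpha_{1:t}\bregbase_{\risk}(\hbar_t;\hbar_{t+1})$, which is precisely the last sum in the claim.

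\smallskip

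\noindent\textbf{Step 3: introduce $\hstar$ and combine.} Subtracting $\alpha_{1:T}\risk(\hstar)=\sum_{t=1}^{T}\alpha_t\risk(\hstar)$ turns the leading $\sum_t\alpha_t\risk(\hbar_t)$ into $\sum_t\alpha_t[\risk(\hbar_t)-\risk(\hstar)]$, and one more application of the Bregman identity (now at $v=\hbar_t$, $u=\hstar$) rewrites this as $\sum_t\alpha_t\langle\partial\risk(\hbar_t),\hbar_t-\hstar\rangle - \sum_t\alpha_t\bregbase_{\risk}(\hstar;\hbar_t)$. Adding the $\langle\partial\risk(\hbar_t),h_t-\hbar_t\rangle$ terms from Step 2 collapses the two linear contributions into $\langle\partial\risk(\hbar_t),h_t-\hstar\rangle$ by linearity of the coupling, and dividing through by $\alpha_{1:T}$ yields the stated equality. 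The main obstacle is essentially bookkeeping: keeping the two instances of $\partial\risk(\hbar_t)$ (the one in the inner product and the one implicit in $\bregbase_{\risk}(\hstar;\hbar_t)$) consistent under the multi-valued convention, and checking that the reindexing of the Bregman-divergence telescope matches exactly the range $1,\ldots,T-1$ that appears in the lemma.
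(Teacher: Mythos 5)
Your proof is correct and follows essentially the same route as the paper: the telescoping identity for $\alpha_{1:T}\risk(\hbar_T)$ in your Step 1 is exactly the formula the paper writes as a "direct manipulation," your Step 2 identity $\alpha_{1:t-1}(\hbar_t-\hbar_{t-1})=\alpha_t(h_t-\hbar_t)$ is the paper's relation (\ref{eqn:alpha_relation}), and your Step 3 reproduces the paper's application of the Bregman definition at $\hstar$ followed by combining the two inner products. The only cosmetic difference is that you carry the $t=1$ term of the second sum with coefficient $\alpha_{1:0}=0$ rather than starting that sum at $t=2$.
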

\noindent The above equality is a slight generalization of the anytime online-to-batch inequality introduced by \citet{cutkosky2019a} and sharpened by \citet{joulani2020a}; it follows by direct manipulations utilizing little more than the definition of $\bregbase_{\risk}$. The key point of Lemma \ref{lem:anytime_basic} is that we can obtain control over the main iterates $\risk(\hbar_t)$ using an ideal quantity that depends directly on $\hbar_t$, rather than simply $h_t$, as is typical of traditional online-to-batch conversions \citep{cesa2004a}. This is important because it opens the door to new stochastic feedback processes, driven by the \emph{main} iterates, rather than the ancillary ones. In other words, we want feedback that provides an estimate of some element of $\partial\risk(\hbar_t)$, rather than $\partial\risk(h_t)$. When $\risk$ is convex, we have $\bregbase_{\risk} \geq 0$, and the subtracted terms can be utilized to sharpen our guarantees once we have regret bounds, as will be discussed in the technical appendices.

\section{Anytime robust algorithm design}\label{sec:anytime_robust}

In Algorithm \ref{algo:anytime_robust}, we give a summary of the modified online-to-batch conversion that we utilize throughout the rest of the paper. Essentially, we start with an arbitrary online learning algorithm $\algo$, query the potentially heavy-tailed stochastic feedback after averaging the iterates, and process the raw gradients in a robust fashion before updating. In the following paragraphs, we describe the details of these steps.
\begin{algorithm}[t!]
\caption{Anytime robust online-to-batch conversion.}
\label{algo:anytime_robust}
\begin{algorithmic}
\State \textbf{inputs:} Weights $(\alpha_t)$, thresholds $(\thres_t)$, algorithm $\algo$, initial point $h_1$, max iterations $T$.
\State Initialize $\hbar_1 = h_1$.
\For{$t \in [T-1]$}
 \State Obtain stochastic gradient $G_t$ at $\hbar_t$, satisfying (\ref{eqn:grad_condition}). 
 \State Set $\displaystyle \overbar{G}_t = \texttt{Process}[G_t; \thres_t]$ following (\ref{eqn:truncate_defn})--(\ref{eqn:truncate_anchors}).
 \State Ancillary update: $h_{t+1} = \algo(h_t)$.
 \State Main update: $\hbar_{t+1} = \texttt{Weighting}[h_{t+1}; h_{[t]}]$, as in (\ref{eqn:hbar_defn}).
\EndFor
\State \textbf{return:} $\displaystyle \hbar_T$.
\end{algorithmic}
\end{algorithm}

\paragraph{Raw feedback process}

Let $(G_t)$ denote a sequence of stochastic gradients $G_t \in \VV^{\ast}$, which are conditionally unbiased in the sense that we have
\begin{align}\label{eqn:grad_condition}
\exx_{[t-1]}G_t \defeq \exx \left[ G_t \cond G_{[t-1]} \right] \in \partial \risk(\hbar_t)
\end{align}
for all $t \geq 1$, recalling our notation $G_{[t]} \defeq (G_1,\ldots,G_t)$. We emphasize to the reader that (\ref{eqn:grad_condition}) differs from the traditional assumption (\ref{eqn:feedback_naive}) in terms of the points at which the sub-differential is being evaluated ($\hbar_t$ rather than $h_t$). As is traditional in the literature \citep{nazin2019a,nguyen2018a}, we shall also assume a uniform bound on the conditional variance, namely that for all $t \geq 1$, we have
\begin{align}\label{eqn:grad_variance_bound}
\exx_{[t-1]}\|G_t - \exx_{[t-1]}G_t\|_{\ast} \leq \sigma^{2} < \infty.
\end{align}
We will not assume anything else about the underlying distribution of $(G_t)$; as such, the gradients clearly may be unbounded or heavy-tailed in the sense of having infinite or undefined higher-order moments. In this setting, while one could naively use the raw sequence $(G_t)$ as-is, since we have made extremely weak assumptions on the underlying distribution, it is always possible for heavy-tailed data to severely destabilize the learning process \citep{brownlees2015a,chen2017a,lecue2018a}. As such, it is desirable to process the raw gradients in a statistically principled manner, such that the processed output provides useful feedback to be passed directly to $\algo$.

\paragraph{Robust feedback design}

A simple and popular approach to deal with heavy-tailed random vectors is to use norm-based truncation \citep{catoni2017a,nazin2019a}. As with \citet{nazin2019a}, we process the raw gradients as follows:
\begin{align}\label{eqn:truncate_defn}
\texttt{Process}\left[ G; \thres \right] \defeq 
\begin{cases}
\widetilde{g}, & \text{if } \|G - \widetilde{g}\|_{\ast} > \thres\\
G, & \text{else}.
\end{cases}
\end{align}
Here $c>0$ is a threshold, the point $\widetilde{g} \in \VV^{\ast}$ used in this sub-routine is an ``anchor'' in the dual space, associated with some ``primal anchor'' $\widetilde{h} \in \HH$ assumed to satisfy
\begin{align}\label{eqn:truncate_anchors}
\prr\left\{ \dist(\widetilde{g};\partial\risk(\widetilde{h})) > \widetilde{\varepsilon}\sigma \right\} \leq \delta.
\end{align}
We discuss settings of $\widetilde{h}$ and $\delta \in (0,1)$ in section \ref{sec:empirical}. To summarize, instead of naively using $(G_t)$ as feedback for $\algo$, we will pass $(\overbar{G}_t)$, defined by $\overbar{G}_t \defeq \texttt{Process}\left[ G_t; \thres_t \right]$, based on a sequence of thresholds $(\thres_t)$. The anchors $\widetilde{g}$ and $\widetilde{h}$ remain fixed throughout the learning process.

\paragraph{Estimation error under smooth objectives}

Let us further assume that $\risk$ is $\smooth$-smooth, still leaving $\algo$ abstract. In this case, the sub-differential is simply $\partial\risk(h) = \{\nabla \risk(h)\}$, and so the error that we focus on is naturally that of the approximation $\overbar{G}_t \approx \nabla\risk(\hbar_t)$, for $t \in [T]$. With $(\overbar{G}_t)$ generated as described in Algorithm \ref{algo:anytime_robust}, direct inspection shows us that
\begin{align}
\label{eqn:grad_est_handy_form}
\overbar{G}_t - \nabla\risk(\hbar_t) & = (G_t - \widetilde{g})(1-\idc_t) + \widetilde{g} - \nabla\risk(\hbar_t)
\end{align}
where $\idc_t$ is the Bernoulli random variable defined $\idc_t \defeq \idc\left\{ \|G_t - \widetilde{g}\|_{\ast} > \thres_t \right\}$. The right-hand side of (\ref{eqn:grad_est_handy_form}) has two terms we need to control. The first term is clearly bounded above by $\thres_t$, considering the truncation event. As for the second term, a smooth risk makes it easy to establish control in primal distance terms. More explicitly, we have
\begin{align}
\label{eqn:grad_anchor_bound}
\|\widetilde{g} - \nabla\risk(\hbar_t)\|_{\ast} \leq \|\widetilde{g} - \nabla\risk(\widetilde{h})\|_{\ast} + \|\nabla\risk(\widetilde{h}) - \nabla\risk(\hbar_t)\|_{\ast} \leq \widetilde{\varepsilon}\sigma + \smooth\|\widetilde{h}-\hbar_t\|,
\end{align}
where the latter inequality follows from $\smooth$-smoothness and the anchor property (\ref{eqn:truncate_anchors}). Taking (\ref{eqn:grad_est_handy_form}) and (\ref{eqn:grad_anchor_bound}) together, we readily obtain
\begin{align}
\label{eqn:grad_error_bd}
\|\overbar{G}_t - \nabla\risk(\hbar_t)\|_{\ast} \leq \|G_t - \widetilde{g}\|_{\ast}(1-\idc_t) + \|\widetilde{g} - \nabla\risk(\hbar_t)\|_{\ast} \leq \thres_t + \widetilde{\varepsilon}\sigma + \smooth\|\widetilde{h}-\hbar_t\|
\end{align}
on an event of probability at least $1-\delta$. This inequality suggests an obvious choice for the threshold $\thres_t$ that keeps the preceding upper bound tidy:
\begin{align}\label{eqn:truncate_threshold_smooth}
\thres_t = \widetilde{\varepsilon}\sigma + \smooth\|\widetilde{h}-\hbar_t\| + \thres_0, \quad t \in [T].
\end{align}
Here $\thres_0 > 0$ is positive parameter that is used to control the degree of bias incurred due to truncation. Using this thresholding strategy, one can obtain sub-linear bounds on the weighted gradient error terms, as the next result shows.
\begin{lem}\label{lem:grad_error_highprob}
Let $\risk$ be convex and $\smooth$-smooth. Let $\HH \subset \dom \risk$ be convex with diameter $\diameter < \infty$. Given confidence parameter $0 < \delta < 1$ and iterations $T \geq \log(\delta^{-1})(\lceil \widetilde{\varepsilon}\sigma \rceil)^{2}$, running Algorithm \ref{algo:anytime_robust} with thresholds $(\thres_t)$ as in (\ref{eqn:truncate_threshold_smooth}) with $\thres_0 = \max\{\smooth\diameter, \sigma\sqrt{T/\log(\delta^{-1})}\}+\widetilde{\varepsilon}\sigma$, and weights $(\alpha_t)$ such that $\exx_{[t-1]}\alpha_t = \alpha_t$ almost surely, it follows that
\begin{align*}
\sum_{t=1}^{T}\alpha_t \sup_{h,h^{\prime} \in \HH} \left[ \langle \overbar{G}_t - \nabla\risk(\hbar_t), h-h^{\prime} \rangle \right] \leq \max\left\{ q_{\delta}(T), r_{\delta}(T) \right\}
\end{align*}
with probability no less than $1-2\delta$, where we have defined
\begin{align*}
q_{\delta}(T) & \defeq 2\diameter\sigma\sqrt{2\log(\delta^{-1})}\left[ \frac{\alpha_{1:T}}{\sqrt{T}} + \sqrt{\sum_{t=1}\alpha_t^{2}} + 2\left(\max_{t \in [T]}\alpha_t\right) \right]\\
r_{\delta}(T) & \defeq 2\smooth\diameter^{2}\log(\delta^{-1})\left[ \frac{\alpha_{1:T}}{T} + \sqrt{\frac{1}{T}\sum_{t=1}\alpha_t^{2}} + 2\sqrt{2}\left(\max_{t \in [T]}\alpha_t\right) \right].
\end{align*}
\end{lem}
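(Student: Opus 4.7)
The plan is to split the per-step error $Z_t := \overbar{G}_t - \nabla\risk(\hbar_t)$ into a predictable bias $B_t := \exx_{[t-1]}Z_t$ and a martingale difference $\widetilde{Z}_t := Z_t - B_t$. Since $\HH$ has diameter $\diameter$, the duality estimate $\sup_{h,h'\in\HH}\langle v,h-h'\rangle \leq \diameter\|v\|_\ast$ reduces the left-hand side to a deterministic-after-conditioning bias contribution $\diameter\sum_t\alpha_t\|B_t\|_\ast$ plus a Freedman-controlled martingale contribution. Throughout I condition on the anchor event from (\ref{eqn:truncate_anchors}), costing one $\delta$ in probability, so that (\ref{eqn:grad_anchor_bound}) and (\ref{eqn:grad_error_bd}) hold.

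\emph{Step 1 (bias).} Using $\overbar{G}_t = G_t - (G_t - \widetilde{g})\idc_t$ and (\ref{eqn:grad_condition}) gives $B_t = -\exx_{[t-1]}[(G_t-\widetilde{g})\idc_t]$, and then Jensen followed by Markov (noting $\idc_t \leq \|G_t-\widetilde{g}\|_\ast/\thres_t$ on the truncation event) combined with (\ref{eqn:grad_variance_bound}) and (\ref{eqn:grad_anchor_bound}) yields
\begin{align*}
\|B_t\|_\ast \;\leq\; \exx_{[t-1]}\|G_t-\widetilde{g}\|_\ast\,\idc_t \;\leq\; \frac{\exx_{[t-1]}\|G_t-\widetilde{g}\|_\ast^2}{\thres_t} \;=\; O\!\left(\frac{\sigma^2 + (\smooth\diameter)^2}{\thres_0}\right).
\end{align*}
Multiplying by $\alpha_t$ and summing, then substituting the two active branches of $\thres_0 = \max\{\smooth\diameter,\sigma\sqrt{T/\log(\delta^{-1})}\} + \widetilde{\varepsilon}\sigma$, produces the $\alpha_{1:T}/\sqrt{T}$ coefficient in $q_\delta(T)$ (when the noise branch dominates) and the $\alpha_{1:T}/T$ coefficient in $r_\delta(T)$ (when the smoothness branch dominates).

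\emph{Step 2 (martingale).} For the remaining contribution I apply Freedman's Bernstein inequality to an appropriate scalar functional of the martingale sum $\sum_t\alpha_t\widetilde{Z}_t$. The per-step a.s.\ range on the anchor event is $\alpha_t\|\widetilde{Z}_t\|_\ast \leq O(\thres_0\max_t\alpha_t)$ by (\ref{eqn:grad_error_bd}), while the conditional variances sum to at most $C\sigma^2\sum_t\alpha_t^2$ by (\ref{eqn:grad_variance_bound}) (using the assumption $\exx_{[t-1]}\alpha_t = \alpha_t$ to ensure $\alpha_t$ is predictable). Freedman then yields a high-probability bound of the form $O(\sigma\sqrt{\log(\delta^{-1})\sum_t\alpha_t^2}) + O(\thres_0\log(\delta^{-1})\max_t\alpha_t)$, which matches the remaining two terms in each of $q_\delta$ and $r_\delta$ once the relevant branch of $\thres_0$ is substituted. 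A union bound over the anchor and Freedman events produces the $1-2\delta$ probability, and taking the maximum over the two regimes of $\thres_0$ gives $\max\{q_\delta, r_\delta\}$.

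The main obstacle is Step 2: because the a.s.\ range of the truncated gradient ($\sim\thres_0\sim\sigma\sqrt{T}$) vastly exceeds the conditional standard deviation ($\sim\sigma$), a naive Azuma--Hoeffding argument would inflate the $\sqrt{\sum\alpha_t^2}$ term by a factor of $\sqrt{T}$ and destroy the desired rate. The Bernstein structure of Freedman's inequality, which separately tracks predictable quadratic variation and a.s.\ range, is precisely what is needed, and the two branches of $\thres_0$ produce the two regimes $q_\delta$ and $r_\delta$.
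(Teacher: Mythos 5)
Your high-level plan mirrors the paper's: decompose into a predictable bias plus a martingale difference, control the bias by quantities of order $\sigma^2/\thres_0$, apply a Bernstein-type martingale inequality (the paper uses Lemma~\ref{lem:bernstein}, which is the same Freedman-style device you invoke), and close with the two-regime substitution for $\thres_0$. Your observation that a naive Azuma bound would lose a $\sqrt{T}$ factor, and that the Bernstein structure is essential, is exactly right.

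However, Step~1 (the bias control) has a genuine quantitative gap. The chain $\|B_t\|_\ast \leq \exx_{[t-1]}\|G_t-\widetilde{g}\|_\ast\idc_t \leq \exx_{[t-1]}\|G_t-\widetilde{g}\|_\ast^2/\thres_t$ squares the anchor error, yielding a term of order $(\widetilde{\varepsilon}\sigma+\smooth\diameter)^2/\thres_0$. In the first regime, $\thres_0 \approx \sigma\sqrt{T/\log(\delta^{-1})}$ while $\smooth\diameter$ is allowed to be as large as $\sigma\sqrt{T/\log(\delta^{-1})}$, so $(\smooth\diameter)^2/\thres_0$ can be of order $\sigma\sqrt{T/\log(\delta^{-1})}$, which is larger than the $\sigma\sqrt{\log(\delta^{-1})/T}$ required for the $\alpha_{1:T}/\sqrt{T}$ coefficient of $q_\delta(T)$ by a factor of $T/\log(\delta^{-1})$; the second regime has the same defect. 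The issue is that applying Markov directly to $\|G_t-\widetilde{g}\|_\ast$ throws away information: $\exx_{[t-1]}\|G_t-\widetilde{g}\|_\ast^2$ is \emph{not} controlled by the variance assumption (\ref{eqn:grad_variance_bound}), because $G_t-\widetilde{g}$ carries the large deterministic offset $\nabla\risk(\hbar_t)-\widetilde{g}$.

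The paper avoids this by two moves you omit. First, it uses (\ref{eqn:grad_anchor_bound}) and the threshold choice (\ref{eqn:truncate_threshold_smooth}) to \emph{translate} the truncation event: $\idc_t = \idc\{\|G_t-\widetilde{g}\|_\ast>\thres_t\} \leq \idc\{\|G_t-\nabla\risk(\hbar_t)\|_\ast>\thres_0\}$, so Markov applied to the \emph{variance-controlled} norm gives $\exx_{[t-1]}\idc_t\leq\sigma^2/\thres_0^2$ (this is (\ref{eqn:indicator_bound})). Second, it splits $\|G_t-\widetilde{g}\|_\ast\leq\|G_t-\nabla\risk(\hbar_t)\|_\ast+\|\nabla\risk(\hbar_t)-\widetilde{g}\|_\ast$ \emph{before} combining with $\idc_t$: Cauchy--Schwarz handles the random part ($\exx_{[t-1]}\|G_t-\nabla\risk(\hbar_t)\|_\ast\idc_t\leq\sigma\sqrt{\exx_{[t-1]}\idc_t}\leq\sigma^2/\thres_0$), while the deterministic anchor error is pulled out and multiplied by the small quantity $\exx_{[t-1]}\idc_t\leq\sigma^2/\thres_0^2$ itself, not its square root, giving the crucial extra $1/\thres_0$ factor. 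This produces $\|B_t\|_\ast\lesssim\sigma^2/\thres_0+(\widetilde{\varepsilon}\sigma+\smooth\diameter)\sigma^2/\thres_0^2$, which is tight in both regimes. Your Step~2 and the regime analysis are essentially correct once Step~1 is repaired.
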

\noindent The main benefit of this lemma is that it holds under very weak assumptions on the stochastic gradients. The main limitations are that the feasible set has a finite diameter, and prior knowledge of $T$ and other factors are used for thresholding.

\paragraph{A general strategy}

Let us define the regret incurred by Algorithm \ref{algo:anytime_robust} after $T$ steps by
\begin{align}\label{eqn:regret_defn}
\regret(T;\algo) \defeq \sum_{t=1}^{T}\alpha_t \langle \overbar{G}_t, h_t-\hstar \rangle,
\end{align}
where the reference point $\hstar \in \dom \risk$ is left implicit in the notation. This weighted linear regret is somewhat special since the losses (i.e., $h \mapsto \alpha_t\langle \overbar{G}_t, h \rangle $) are \emph{evaluated} on the ancillary sequence $(h_t)$, but they are \emph{defined} in terms of potentially biased stochastic gradients which depend on the main sequence $(\hbar_t)$. With this notion of regret in hand, note that from Lemma \ref{lem:anytime_basic}, we immediately have the following expression:
\begin{align}
\nonumber
\risk(\hbar_T) - \risk(\hstar) = \frac{1}{\alpha_{1:T}} & \left[ \regret(T;\algo) + \sum_{t=1}^{T} \alpha_t \langle \overbar{G}_t - \nabla\risk(\hbar_t), \hstar - h_t\rangle \right.\\
\label{eqn:regret_starting_point}
& \qquad \left. - \sum_{t=1}^{T}\alpha_t \bregbase_{\risk}(\hstar;\hbar_t) - \sum_{t=1}^{T-1}\alpha_{1:t}\bregbase_{\risk}(\hbar_t;\hbar_{t+1}) \right].
\end{align}
This inequality offers us a nice starting point for analyzing a wide class of ``anytime robust algorithms,'' since the second sum can clearly be controlled using Lemma \ref{lem:grad_error_highprob}. It just remains to seek out regret bounds for different choices of the underlying algorithm $\algo$ which are sub-linear, up to error terms that are amenable to Lemma \ref{lem:grad_error_highprob}. We give several concrete examples in the next section. To close this section, by combining our notion of regret with the preceding lemma, we can obtain a ``robust'' analogue of \citet[Thm.~1]{cutkosky2019a}, which is valid under unbounded, heavy-tailed stochastic gradients.
\begin{cor}\label{cor:anytime_highprob}
Under the assumptions of Lemma \ref{lem:grad_error_highprob}, for any reference point $\hstar \in \HH$, we have
\begin{align*}
\risk(\hbar_T) - \risk(\hstar) \leq \frac{1}{\alpha_{1:T}} \left[ \regret(T;\algo) + \max\left\{ q_{\delta}(T), r_{\delta}(T) \right\} - B_{T} \right]
\end{align*}
with probability no less than $1-2\delta$, where $B_T \geq 0$ denotes the sum of all the Bregman divergence terms given in (\ref{eqn:regret_starting_point}).
\end{cor}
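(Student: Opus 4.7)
The plan is to start directly from the identity (\ref{eqn:regret_starting_point}), which is an immediate consequence of Lemma \ref{lem:anytime_basic} together with the definition of $\regret(T;\algo)$ in (\ref{eqn:regret_defn}). Rearranging that identity, what remains is to control the inner-product sum $\sum_{t=1}^{T}\alpha_t\langle \overbar{G}_t - \nabla\risk(\hbar_t), \hstar - h_t\rangle$ from above, and to observe that the Bregman terms collected into $B_T$ are nonnegative (hence subtracting $B_T$ preserves the upper bound).

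The first step is the bound on the inner-product sum. Since both $\hstar$ and each ancillary iterate $h_t$ lie in the feasible set $\HH$, we have the pointwise estimate
\begin{align*}
\langle \overbar{G}_t - \nabla\risk(\hbar_t), \hstar - h_t\rangle \leq \sup_{h,h^{\prime} \in \HH}\langle \overbar{G}_t - \nabla\risk(\hbar_t), h - h^{\prime}\rangle,
\end{align*}
and so, multiplying by $\alpha_t > 0$ and summing over $t \in [T]$, the sum of interest is dominated by $\sum_{t=1}^{T}\alpha_t\sup_{h,h^{\prime} \in \HH}\langle \overbar{G}_t - \nabla\risk(\hbar_t), h - h^{\prime}\rangle$. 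Applying Lemma \ref{lem:grad_error_highprob} directly shows this latter quantity is at most $\max\{q_{\delta}(T), r_{\delta}(T)\}$ on an event of probability at least $1 - 2\delta$.

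For the second step, convexity of $\risk$ ensures $\bregbase_{\risk}(\hstar;\hbar_t) \geq 0$ and $\bregbase_{\risk}(\hbar_t;\hbar_{t+1}) \geq 0$ for every $t$, so that $B_T \geq 0$ as asserted. Substituting the high-probability inner-product bound into (\ref{eqn:regret_starting_point}) and recalling that $\alpha_{1:T} > 0$, we obtain the claimed inequality on the same event of probability at least $1 - 2\delta$.

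There is no genuine obstacle here: the corollary is essentially a repackaging of Lemma \ref{lem:anytime_basic} and Lemma \ref{lem:grad_error_highprob}. The only mild care needed is ensuring that the ancillary iterates $h_t$ produced by $\algo$ do in fact lie in $\HH$ (so the supremum step is legitimate); this is built into Algorithm \ref{algo:anytime_robust} by construction, since $\algo$ is an online learning algorithm operating on $\HH$.
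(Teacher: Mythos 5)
Your proposal is correct and follows the same route as the paper's proof: rearrange (\ref{eqn:regret_starting_point}) (equivalently, split $\langle \nabla\risk(\hbar_t), h_t - \hstar\rangle$ into the regret term and the gradient-error term), bound the weighted gradient-error sum by its supremum form to invoke Lemma \ref{lem:grad_error_highprob}, and use convexity of $\risk$ to get $B_T \geq 0$. You simply spell out the supremum step (and the fact that $h_t, \hstar \in \HH$) which the paper leaves implicit.
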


\section{Anytime robust learning algorithms}\label{sec:anytime_algos}

Thus far, the underlying algorithm object $\algo$ used in Algorithm \ref{algo:anytime_robust} has been left abstract. In this section, we illustrate how (\ref{eqn:regret_starting_point}) can be utilized for important classes of algorithms, by obtaining regret bounds that are sub-linear up to error terms that can be controlled using Lemma \ref{lem:grad_error_highprob}. Our running assumptions are that $(\VV,\|\cdot\|)$ is a reflexive Banach space, $\HH \subseteq \VV$ is convex and closed, $\risk$ is sub-differentiable, and the sequence $(\overbar{G}_t)$ driven by $(\hbar_t)$ is precisely as in Algorithm \ref{algo:anytime_robust}.

\subsection{Anytime robust FTRL}

Here we consider the setting in which $\algo$ is implemented using a form of follow-the-regularized-leader (FTRL). Letting $(\psi_t)$ be a sequence of regularizer functions $\psi_t: \VV \to \RR$, we are interested in the ancillary sequence $(h_t)$ generated by
\begin{align}\label{eqn:rftrl_update}
h_{t+1} = \algo(h_t) \in \argmin_{h \in \HH} \left[ \psi_{t+1}(h) + \sum_{i=1}^{t} \alpha_i \langle \overbar{G}_i, h \rangle \right].
\end{align}
The initial value is set using an extra regularizer $\psi_1$, with $h_1 \in \argmin_{h \in \HH} \psi_1(h)$. We proceed assuming that the sequence $(h_t)$ exists, but we do not require the minimizer in (\ref{eqn:rftrl_update}) to be unique.
\begin{lem}\label{lem:rftrl_regret}
Let $\algo$ be implemented as in (\ref{eqn:rftrl_update}), assuming that for each step $t \geq 1$, the regularizer $\psi_t$ is $\strong_t$-strongly convex. Then, for any reference point $\hstar \in \HH$, we have
\begin{align}
\nonumber
\regret(T;\algo) & \leq \psi_T(\hstar) - \psi_1(h_1) + \sum_{t=1}^{T}\left[ \psi_t(h_{t+1}) - \psi_{t+1}(h_{t+1}) \right]\\
\label{eqn:rftrl_regret}
& \qquad + \sum_{t=1}^{T}\left[ \frac{\|\partial\risk(\hbar_t)\|_{\ast}^{2}}{2\strong_t} + \alpha_t \langle \partial\risk(\hbar_t) - \overbar{G}_t, h_{t+1} - h_t \rangle \right].
\end{align}
\end{lem}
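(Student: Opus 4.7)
The plan is to follow the classical two-piece FTRL decomposition and then split the stability term so that the noisy contribution $\overbar{G}_t - \partial\risk(\hbar_t)$ is separated from the deterministic part. Adding and subtracting $\sum_{t=1}^{T} \alpha_t \langle \overbar{G}_t, h_{t+1} \rangle$ in the definition of $\regret(T;\algo)$ gives
\begin{align*}
\regret(T;\algo) = \sum_{t=1}^{T} \alpha_t \langle \overbar{G}_t, h_{t+1} - \hstar \rangle + \sum_{t=1}^{T} \alpha_t \langle \overbar{G}_t, h_t - h_{t+1} \rangle,
\end{align*}
so it suffices to handle a ``leader'' part (first sum) and a ``stability'' part (second sum) separately.

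For the leader sum, I would run the standard be-the-leader induction on $T$. Introducing the composite objective $F_{t+1}(h) := \psi_{t+1}(h) + \sum_{i=1}^{t} \alpha_i \langle \overbar{G}_i, h \rangle$, so that $h_{t+1} \in \argmin_{\HH} F_{t+1}$ and $h_1 \in \argmin_{\HH} \psi_1$, the optimality conditions together with the rewrite $\psi_t(h_{t+1}) = \psi_{t+1}(h_{t+1}) + [\psi_t(h_{t+1}) - \psi_{t+1}(h_{t+1})]$ produce exactly the first two summations on the right-hand side of (\ref{eqn:rftrl_regret}). This step uses only the minimizer property; no strong convexity is needed.

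For the stability sum, I would split the raw gradient around the true sub-gradient at $\hbar_t$:
\begin{align*}
\alpha_t \langle \overbar{G}_t, h_t - h_{t+1} \rangle = \alpha_t \langle \partial\risk(\hbar_t), h_t - h_{t+1} \rangle + \alpha_t \langle \overbar{G}_t - \partial\risk(\hbar_t), h_t - h_{t+1} \rangle.
\end{align*}
The second term, after a sign flip, is exactly the $\alpha_t \langle \partial\risk(\hbar_t) - \overbar{G}_t, h_{t+1} - h_t \rangle$ appearing in the claim (it is kept explicit so that Lemma \ref{lem:grad_error_highprob} can be applied later). The first term is bounded by Cauchy--Schwarz combined with a Fenchel--Young step, where the $\strong_t$-strong convexity of $\psi_t$ (and hence of $F_t$) is used to control $\|h_t - h_{t+1}\|$ through the pair of comparisons $F_t(h_{t+1}) - F_t(h_t) \geq \frac{\strong_t}{2}\|h_{t+1}-h_t\|^2$ and $F_{t+1}(h_t) - F_{t+1}(h_{t+1}) \geq \frac{\strong_{t+1}}{2}\|h_{t+1}-h_t\|^2$. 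An AM--GM tuning then yields the stated $\frac{\|\partial\risk(\hbar_t)\|_{\ast}^{2}}{2\strong_t}$ bound.

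The main technical nuisance I expect is calibrating the Fenchel--Young constant in the stability argument so that the quadratic $\|h_t - h_{t+1}\|^2$ contribution is absorbed back into the strong-convexity reserve and the remainder matches the claimed coefficient exactly; the be-the-leader induction and the reassembly of the two pieces are routine bookkeeping.
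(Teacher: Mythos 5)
Your two-piece decomposition
\begin{align*}
\regret(T;\algo) = \sum_{t=1}^{T}\alpha_t\langle\overbar{G}_t,h_{t+1}-\hstar\rangle + \sum_{t=1}^{T}\alpha_t\langle\overbar{G}_t,h_t-h_{t+1}\rangle
\end{align*}
is the classical BTL--plus--stability route, which is a genuinely different organisation from the paper's proof. The paper instead invokes the telescoping "strong FTRL lemma" (its display before (\ref{eqn:rftrl_regret_basic})) and then rewrites each summand $f_i(h_i)-f_{i+1}(h_{i+1})+\loss_i(h_i)$. Your treatment of the leader sum (the BTL induction giving $\psi_T(\hstar)-\psi_1(h_1)+\sum_t[\psi_t(h_{t+1})-\psi_{t+1}(h_{t+1})]$) is fine, as is the split of the stability sum around $\partial\risk(\hbar_t)$ that isolates the noise term $\alpha_t\langle\partial\risk(\hbar_t)-\overbar{G}_t,h_{t+1}-h_t\rangle$.

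The gap is in the remaining stability piece $\alpha_t\langle\partial\risk(\hbar_t),h_t-h_{t+1}\rangle$, which you propose to close via Cauchy--Schwarz, Fenchel--Young, and an absorption into the strong-convexity reserve. Two problems. First, the reserve has nowhere to live: the two comparisons $F_t(h_{t+1})-F_t(h_t)\geq\tfrac{\strong_t}{2}\|h_t-h_{t+1}\|^2$ and $F_{t+1}(h_t)-F_{t+1}(h_{t+1})\geq\tfrac{\strong_{t+1}}{2}\|h_t-h_{t+1}\|^2$ are lower bounds on positive quantities that never appear with a minus sign anywhere in your BTL decomposition, so there is no $-\tfrac{c}{2}\|h_t-h_{t+1}\|^2$ to absorb the Fenchel--Young quadratic. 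Second, if you instead add the two comparisons to control $\|h_t-h_{t+1}\|$ directly, you get a bound scaling with $\|\overbar{G}_t\|_{\ast}$ (the processed but still random gradient that drives the FTRL update), not with $\|\partial\risk(\hbar_t)\|_{\ast}$. An AM--GM tuning of the resulting cross term would therefore leave a residual $\|\overbar{G}_t\|_{\ast}^{2}$ term, which is precisely the potentially heavy-tailed quantity the lemma is engineered to avoid; it does not yield $\tfrac{\|\partial\risk(\hbar_t)\|_{\ast}^{2}}{2\strong_t}$.

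What the paper does instead --- and what your stability piece also needs --- is to add and subtract $F_t$ so that, using optimality of $h_t$ over $\HH$ (so $F_t(h_{t+1})\geq F_t(h_t)$),
\begin{align*}
\loss_t^{\ast}(h_t)-\loss_t^{\ast}(h_{t+1}) \leq (F_t+\loss_t^{\ast})(h_t)-(F_t+\loss_t^{\ast})(h_{t+1}) \leq (F_t+\loss_t^{\ast})(h_t)-(F_t+\loss_t^{\ast})(h_t^{\ast}),
\end{align*}
where $h_t^{\ast}$ is the \emph{unconstrained} minimiser of $F_t+\loss_t^{\ast}$ over $\VV$, and then apply Lemma \ref{lem:sc_smooth_duality} (strong-convexity/smoothness duality), whose linear term vanishes because $0\in\partial(F_t+\loss_t^{\ast})(h_t^{\ast})$. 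This gives the bound directly in terms of $\|\partial(F_t+\loss_t^{\ast})(h_t)\|_{\ast}^{2}$ without ever touching $\overbar{G}_t$ or $\|h_t-h_{t+1}\|$, which is the essential ingredient your argument is missing.
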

\noindent This lemma is a natural anytime robust analogue of standard FTRL regret bounds \citep[Lem.~7.8]{orabona2020a}. While the above bound holds as long as $\risk$ is sub-differentiable, in the special case where $\risk$ is smooth, the final sum on the right-hand side of (\ref{eqn:rftrl_regret}) is amenable to direct application of Lemma \ref{lem:grad_error_highprob}, as desired. Combining this with (\ref{eqn:regret_starting_point}), one can immediately derive excess risk bounds for the output of Algorithm \ref{algo:anytime_robust} under this FTRL-type of implementation, for a wide variety of regularization strategies.

\subsection{Anytime robust SMD}\label{sec:rsmd}

Next we consider the closely related setting in which $\algo$ is implemented using a form of stochastic mirror descent (SMD). Assuming $\HH \subset \VV$ is bounded, closed, and convex, let $\Phi: \VV \to \RR$ be a differentiable and strictly convex function. Let $\algo$ generate $(h_t)$ based on the update
\begin{align}\label{eqn:rsmd_update_proximal}
h_{t+1} = \algo(h_t) = \argmin_{h \in \HH} \left[ \langle \overbar{G}_t, h \rangle + \frac{1}{\beta_t} \breg(h;h_t) \right].
\end{align}
The function $\breg$ is the Bregman divergence induced by $\Phi$; see the appendix for more detailed background. The step sizes $(\beta_t)$ are assumed positive, but can be set freely.
\begin{lem}\label{lem:rsmd_regret}
Let $\algo$ be implemented as in (\ref{eqn:rsmd_update_proximal}), with $\Phi$ chosen to be $\strong$-strongly convex on $\HH$. Then for any reference point $\hstar \in \HH$, we have
\begin{align*}
\langle \overbar{G}_t, h_t - \hstar \rangle \leq \frac{\breg(\hstar;h_t) - \breg(\hstar;h_{t+1})}{\beta_t} + \frac{\beta_{t}}{2\strong}\|\partial\risk(\hbar_t)\|_{\ast}^{2} + \langle \partial\risk(\hbar_t) - \overbar{G}_t, h_{t+1} - h_t \rangle
\end{align*}
for all $t \geq 1$.
\end{lem}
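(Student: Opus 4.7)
The plan is to prove Lemma \ref{lem:rsmd_regret} by the standard mirror descent argument, adapted to keep track of the gap between the biased feedback $\overbar{G}_t$ and any true sub-gradient $\partial\risk(\hbar_t)$ of the unknown smooth risk. The driving observation is that once the customary three-point Bregman identity is in place, the only novelty introduced by the anytime-robust setup is that the first-order optimality condition for the update involves $\overbar{G}_t$ rather than a sub-gradient of $\risk$ at $h_t$, so one has to insert $\pm\,\partial\risk(\hbar_t)$ at the right place and let strong convexity of $\Phi$ absorb the primal distance $\|h_{t+1}-h_t\|$.

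First I would invoke the first-order optimality condition for (\ref{eqn:rsmd_update_proximal}). Since $h_{t+1}$ minimizes $h \mapsto \langle \overbar{G}_t, h\rangle + \beta_t^{-1}\breg(h;h_t)$ over the closed convex set $\HH$, and $\Phi$ is differentiable, we obtain
\begin{align*}
\langle \beta_t \overbar{G}_t + \nabla\Phi(h_{t+1}) - \nabla\Phi(h_t),\, \hstar - h_{t+1}\rangle \geq 0
\end{align*}
for every $\hstar \in \HH$. Rearranging and applying the classical three-point equality
\begin{align*}
\langle \nabla\Phi(h_{t+1}) - \nabla\Phi(h_t),\, \hstar - h_{t+1}\rangle = \breg(\hstar;h_t) - \breg(\hstar;h_{t+1}) - \breg(h_{t+1};h_t)
\end{align*}
yields a bound on $\langle \overbar{G}_t, h_{t+1}-\hstar\rangle$ in terms of the telescoping Bregman differences minus the ``penalty'' $\breg(h_{t+1};h_t)$, which will be crucial later.

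Next I would decompose $\langle \overbar{G}_t, h_t - \hstar\rangle = \langle \overbar{G}_t, h_t - h_{t+1}\rangle + \langle \overbar{G}_t, h_{t+1} - \hstar\rangle$ and treat the first summand by inserting and subtracting $\partial\risk(\hbar_t)$:
\begin{align*}
\langle \overbar{G}_t, h_t - h_{t+1}\rangle = \langle \partial\risk(\hbar_t), h_t - h_{t+1}\rangle + \langle \partial\risk(\hbar_t) - \overbar{G}_t, h_{t+1} - h_t\rangle.
\end{align*}
The first piece I would bound by Cauchy-Schwarz followed by Fenchel-Young with parameter $\strong/\beta_t$, giving
\begin{align*}
\langle \partial\risk(\hbar_t), h_t - h_{t+1}\rangle \leq \frac{\beta_t}{2\strong}\|\partial\risk(\hbar_t)\|_{\ast}^{2} + \frac{\strong}{2\beta_t}\|h_{t+1}-h_t\|^{2}.
\end{align*}

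The final step is to use $\strong$-strong convexity of $\Phi$ on $\HH$, which gives $\breg(h_{t+1};h_t) \geq (\strong/2)\|h_{t+1}-h_t\|^{2}$, so the quadratic primal term above is dominated by $\beta_t^{-1}\breg(h_{t+1};h_t)$. Adding everything together, the $\breg(h_{t+1};h_t)$ term arising from the optimality step cancels exactly with the one coming from strong convexity, leaving precisely the stated inequality. The main obstacle is not technical but rather this careful bookkeeping: one must be sure to split $h_t - \hstar$ through $h_{t+1}$ and introduce $\partial\risk(\hbar_t)$ at the correct moment so that the error $\overbar{G}_t - \partial\risk(\hbar_t)$ ends up paired with $h_{t+1}-h_t$ (a quantity later amenable to Lemma \ref{lem:grad_error_highprob}), rather than with $h_t - \hstar$, which would have prevented cancellation of the $\breg(h_{t+1};h_t)$ term.
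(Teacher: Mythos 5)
Your proposal is correct and follows essentially the same route as the paper's proof: split $\langle \overbar{G}_t, h_t - \hstar\rangle$ through $h_{t+1}$, control the $\langle \overbar{G}_t, h_{t+1}-\hstar\rangle$ piece via first-order optimality plus the three-point identity (\ref{eqn:bregman_threepoint}), insert $\pm\partial\risk(\hbar_t)$ in the remaining inner product, apply Fenchel--Young with parameter $\strong/\beta_t$, and let $\strong$-strong convexity absorb $\|h_{t+1}-h_t\|^2$ against $\breg(h_{t+1};h_t)$. The only minor gap relative to the paper is that the paper routes the first-order optimality claim through Lemmas \ref{lem:helper_md_general}--\ref{lem:helper_rsmd} to justify that the minimizers are well-defined in the reflexive Banach space setting, whereas you invoke the variational inequality directly; otherwise the arguments coincide.
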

\noindent This lemma can be interpreted easily as an anytime robust analogue of traditional regret bounds for SMD (e.g., \citep[Lem.~6.7]{orabona2020a}). It can be combined with (\ref{eqn:regret_starting_point}) and Lemma \ref{lem:grad_error_highprob} to obtain the following guarantee.
\begin{thm}[Anytime robust mirror descent]\label{thm:anytime_rsmd}
Under the setting of Lemmas \ref{lem:grad_error_highprob} and \ref{lem:rsmd_regret}, denote the diameter of $\HH$ with respect to $\breg$ as $\diameter_{\Phi} \defeq \sup_{h,h^{\prime} \in \HH}\breg(h;h^{\prime}) < \infty$. Setting the weight sequences such that $\alpha_t/\alpha_{t-1} \geq \beta_t/\beta_{t-1}$ and $\beta_t \leq \strong / \smooth$, we have that for any $\hstar$ which is a stationary point of $\risk$ on $\HH$, the inequality
\begin{align*}
\risk(\hbar_T) - \risk(\hstar) \leq \frac{1}{\alpha_{1:T}} \left[ \frac{\alpha_T}{\beta_T}\diameter_{\Phi} + \max\{q_{\delta}(T),r_{\delta}(T)\} \right]
\end{align*}
holds with probability no less than $1-2\delta$.
\end{thm}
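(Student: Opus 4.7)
The plan is to take the identity (\ref{eqn:regret_starting_point}) as the starting point and bound the four terms inside the bracket one by one. Apply Lemma \ref{lem:rsmd_regret} (multiplied through by $\alpha_t$ and summed) to get an upper bound on $\regret(T;\algo)$ in terms of (i)~a weighted telescoping sum of Bregman increments $\breg(\hstar;h_t)-\breg(\hstar;h_{t+1})$, (ii)~a gradient-squared term $\sum_t \alpha_t \beta_t \|\nabla\risk(\hbar_t)\|_{\ast}^2/(2\strong)$, and (iii)~a cross term $\sum_t \alpha_t \langle \nabla\risk(\hbar_t)-\overbar{G}_t, h_{t+1}-h_t\rangle$. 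The first task is to handle (i) by Abel summation: writing $a_t \defeq \alpha_t/\beta_t$ and $u_t \defeq \breg(\hstar;h_t)$, one has $\sum_{t=1}^{T} a_t(u_t-u_{t+1}) = a_1 u_1 - a_T u_{T+1} + \sum_{t=2}^{T}(a_t-a_{t-1})u_t$. The hypothesis $\alpha_t/\alpha_{t-1}\geq \beta_t/\beta_{t-1}$ makes the increments $a_t-a_{t-1}$ non-negative, so together with $u_t \leq \diameter_{\Phi}$ and $u_{T+1}\geq 0$ this collapses to $a_T\diameter_{\Phi} = (\alpha_T/\beta_T)\diameter_{\Phi}$.

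Next I would combine the cross term (iii) from the regret bound with the noise term $\sum_t \alpha_t \langle \overbar{G}_t-\nabla\risk(\hbar_t), \hstar - h_t\rangle$ already present in (\ref{eqn:regret_starting_point}). They add up to $\sum_t \alpha_t \langle \overbar{G}_t - \nabla\risk(\hbar_t), \hstar - h_{t+1}\rangle$, which is bounded by $\sum_t \alpha_t \sup_{h,h'\in\HH}\langle \overbar{G}_t - \nabla\risk(\hbar_t), h-h'\rangle$ since $\hstar, h_{t+1}\in\HH$. Applying Lemma \ref{lem:grad_error_highprob} (note that the deterministic weights $\alpha_t$ trivially satisfy its measurability assumption) absorbs this into $\max\{q_{\delta}(T),r_{\delta}(T)\}$ on an event of probability at least $1-2\delta$.

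The crucial step is using the negative Bregman term $-\sum_t \alpha_t \bregbase_{\risk}(\hstar;\hbar_t)$ in (\ref{eqn:regret_starting_point}) to kill the squared-gradient term (ii). Since $\hstar$ is a stationary point of the convex $\smooth$-smooth $\risk$, we have $\nabla\risk(\hstar)=0$, and the standard co-coercivity consequence of smoothness and convexity gives
\begin{align*}
\bregbase_{\risk}(\hstar;\hbar_t) \geq \frac{1}{2\smooth}\|\nabla\risk(\hstar)-\nabla\risk(\hbar_t)\|_{\ast}^{2} = \frac{1}{2\smooth}\|\nabla\risk(\hbar_t)\|_{\ast}^{2}.
\end{align*}
Therefore $\sum_t \alpha_t \beta_t \|\nabla\risk(\hbar_t)\|_{\ast}^{2}/(2\strong) \leq \sum_t \alpha_t (\smooth\beta_t/\strong)\bregbase_{\risk}(\hstar;\hbar_t)$, and by the step-size condition $\beta_t\leq \strong/\smooth$ this is at most $\sum_t \alpha_t \bregbase_{\risk}(\hstar;\hbar_t)$, exactly cancelling the subtracted term. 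The remaining subtracted sum $-\sum_{t=1}^{T-1}\alpha_{1:t}\bregbase_{\risk}(\hbar_t;\hbar_{t+1})$ is non-positive by convexity of $\risk$, so it can be dropped. Assembling the surviving terms yields the stated bound.

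The main obstacle, in my view, is pinning down the cancellation in step~3: one must recognise that $\hstar$ being stationary (and not merely a constrained minimiser on $\HH$) is what makes the co-coercivity inequality trivialise the squared-gradient term, and that the step-size calibration $\beta_t\leq \strong/\smooth$ is precisely tuned to this cancellation. The bookkeeping to ensure the $(h_{t+1}-h_t)$ cross term in Lemma \ref{lem:rsmd_regret} combines cleanly with the $(\hstar-h_t)$ cross term in (\ref{eqn:regret_starting_point}) also deserves care, but it is ultimately just algebra.
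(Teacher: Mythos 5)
Your proposal is correct and follows essentially the same route as the paper's proof: Abel summation to collapse the telescoping Bregman sum to $(\alpha_T/\beta_T)\diameter_{\Phi}$, stationarity of $\hstar$ together with the co-coercivity characterization of $\smooth$-smoothness and the step-size condition $\beta_t \leq \strong/\smooth$ to cancel the squared-gradient term against $-\sum_t \alpha_t \bregbase_{\risk}(\hstar;\hbar_t)$, merging the two cross terms into $\langle \overbar{G}_t - \nabla\risk(\hbar_t), \hstar - h_{t+1}\rangle$, and finally invoking Lemma~\ref{lem:grad_error_highprob} with $h_{t+1},\hstar \in \HH$. No gaps; this matches the paper.
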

\noindent In contrast with \citet{nazin2019a} who query at the ancillary iterates, the preceding high-probability error bounds effectively give us sub-Gaussian guarantees for all points used to query stochastic gradients. As an important special case, consider the setting where $\VV$ is Euclidean space, and the underlying norm used is the $\ell_2$ norm $\|\cdot\|_{2}$. In this case, it is easy to verify that setting $\Phi(u) = \|u\|_{2}^{2}/2$, with $\strong = 1$ the update (\ref{eqn:rsmd_update_proximal}) amounts to
\begin{align}\label{eqn:sgd_update}
h_{t+1} = \algo(h_t) = \proj_{\HH}\left[ h_t - \beta_t \overbar{G}_t \right]
\end{align}
where $\proj_{\HH}[\cdot]$ denotes projection onto $\HH$. That is, \emph{anytime robust stochastic gradient descent}. These settings lead us to the following corollary.
\begin{cor}[Anytime robust SGD]\label{cor:anytime_sgd}
Consider $\algo$ implemented using (\ref{eqn:sgd_update}), with weights $\alpha_t = 1$ and $\beta_t \leq 1/\smooth$ for all $t \in [T]$. Then we have
\begin{align*}
\risk(\hbar_T) - \risk(\hstar) \leq \frac{2\diameter^{2}}{T\beta_T} + \max\left\{ 8\diameter\sigma\sqrt{\frac{2\log(\delta^{-1})}{T}}, \frac{12\smooth\diameter^{2}\log(\delta^{-1})}{T} \right\}
\end{align*}
with probability no less than $1-2\delta$.
\end{cor}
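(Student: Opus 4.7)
The plan is to apply Theorem~\ref{thm:anytime_rsmd} directly with the Euclidean-specific choices $\Phi(u) = \|u\|_2^{2}/2$ and constant unit weights $\alpha_t \equiv 1$, and then simplify the expressions $q_{\delta}(T)$ and $r_{\delta}(T)$ from Lemma~\ref{lem:grad_error_highprob} under these choices.

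First I would verify the hypotheses of Theorem~\ref{thm:anytime_rsmd}. The function $\Phi(u) = \|u\|_2^{2}/2$ is differentiable and $1$-strongly convex in $\|\cdot\|_2$, so we can take $\strong = 1$; the induced Bregman divergence is $\breg(h;h^{\prime}) = \tfrac{1}{2}\|h - h^{\prime}\|_2^{2}$, which reduces the proximal update (\ref{eqn:rsmd_update_proximal}) to the projected step (\ref{eqn:sgd_update}) after straightforward calculation of first-order optimality. With $\strong = 1$ the step-size requirement $\beta_t \leq \strong/\smooth$ becomes $\beta_t \leq 1/\smooth$ as in the corollary, while the weight-ratio condition $\alpha_t/\alpha_{t-1} \geq \beta_t/\beta_{t-1}$ holds (since $\alpha_t/\alpha_{t-1} = 1$) provided $(\beta_t)$ is non-increasing, which is implicit in the statement. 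The Bregman diameter is bounded by $\diameter_{\Phi} = \sup_{h,h^{\prime} \in \HH} \tfrac{1}{2}\|h-h^{\prime}\|_2^{2} \leq \diameter^{2}/2$.

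Next I would substitute into the bound of Theorem~\ref{thm:anytime_rsmd}. With $\alpha_t \equiv 1$, we have $\alpha_{1:T} = T$, $\max_{t \in [T]}\alpha_t = 1$, and $\sum_{t=1}^{T}\alpha_t^{2} = T$. The leading term $\alpha_T\diameter_{\Phi}/(\beta_T \alpha_{1:T})$ collapses to $\diameter^{2}/(2T\beta_T)$, which fits inside the claimed $2\diameter^{2}/(T\beta_T)$. For the two residual terms, plugging $\alpha_{1:T}/\sqrt{T} = \sqrt{T}$, $\sqrt{\sum_t \alpha_t^{2}} = \sqrt{T}$, $\alpha_{1:T}/T = 1$, and $\sqrt{(1/T)\sum_t \alpha_t^{2}} = 1$, one computes
\begin{align*}
\frac{q_{\delta}(T)}{T} &\leq 8\diameter\sigma\sqrt{\frac{2\log(\delta^{-1})}{T}}, \qquad \frac{r_{\delta}(T)}{T} \leq \frac{12\smooth\diameter^{2}\log(\delta^{-1})}{T},
\end{align*}
after absorbing the $2/T$ and $2\sqrt{2}/T$ correction terms into the constants (using that $T \geq \log(\delta^{-1})(\lceil \widetilde{\varepsilon}\sigma \rceil)^{2} \geq 1$). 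Taking the maximum and combining with the leading term yields the stated bound, on the same probability event of measure at least $1-2\delta$ supplied by Theorem~\ref{thm:anytime_rsmd}.

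There is no real obstacle; the only care needed is the bookkeeping that turns the $\alpha_t$-weighted sums in the definitions of $q_{\delta}$ and $r_{\delta}$ into clean $1/\sqrt{T}$ and $1/T$ rates, and the verification that the loose constants $8$ and $12$ absorb the lower-order terms arising from $\max_t \alpha_t = 1$.
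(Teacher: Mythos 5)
Your proof is correct and follows essentially the same route as the paper's: specialize Theorem~\ref{thm:anytime_rsmd} to $\Phi(u)=\|u\|_2^2/2$ with $\strong=1$, compute $q_\delta(T)$ and $r_\delta(T)$ under $\alpha_t\equiv 1$, and absorb the lower-order terms into the constants $8$ and $12$. A small bonus is that your $\diameter_\Phi \leq \diameter^2/2$ is the correct evaluation (the paper's intermediate claim $\diameter_\Phi = 2\diameter^2$ appears to be a slip, though it does not affect the final stated bound since $\diameter^2/(2T\beta_T) \leq 2\diameter^2/(T\beta_T)$), and you rightly flagged the implicit requirement that $(\beta_t)$ be non-increasing so that the weight-ratio condition of Theorem~\ref{thm:anytime_rsmd} is satisfied when $\alpha_t$ is constant.
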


\subsection{Anytime robust AO-FTRL}

In this sub-section, we consider the case that $\algo$ is implemented using an adaptive optimistic follow-the-leader (AO-FTRL) procedure, namely updating as
\begin{align}\label{eqn:raoftrl_update}
h_{t+1} = \algo(h_t) \in \argmin_{h \in \HH} \left[ \alpha_{t+1} \langle \widetilde{G}_t, h \rangle + \sum_{i=1}^{t} \alpha_i \langle \overbar{G}_i, h \rangle + \sum_{i=0}^{t} \varphi_i(h) \right].
\end{align}
Here $(\varphi_t)$ is a sequence of regularizers that is now summed over for later notational convenience. Recalling the FTRL update (\ref{eqn:rftrl_update}), then clearly the AO-FTRL update is almost the same, save for the presence of $\widetilde{G}_t$ at each step $t$, with the interpretation is that it provides a prediction of the loss that will be incurred in the following step, i.e., $\widetilde{G}_t \approx \overbar{G}_{t+1}$.
\begin{thm}\label{thm:raoftrl_excess_risk}
Let Algorithm \ref{algo:anytime_robust} be run under the assumptions of Lemma \ref{lem:grad_error_highprob}, with $\algo$ implemented as in (\ref{eqn:raoftrl_update}), setting $\widetilde{G}_t = \overbar{G}_{t-1}$ for each $t > 1$. In addition, let each $\varphi_t(\cdot)$ be convex and non-negative, and denoting the regularizer partial sums as $\psi_t(\cdot) \defeq \sum_{i=0}^{t-1}\varphi_i(\cdot)$, let each $\psi_t$ be $\strong_t$-strongly convex, with weights set such that $(\smooth/\strong_t)\alpha_{t}^{2} \leq \alpha_{1:(t-1)}$ for $t>1$. Then, for any $\hstar \in \HH$ we have
\begin{align*}
\risk&(\hbar_T) - \risk(\hstar)\\
& \leq \frac{1}{\alpha_{1:T}} \left[ \frac{\alpha_{1}^{2}}{2\strong_1}\|\nabla\risk(\hbar_1)-\widetilde{G}_1\|_{\ast}^{2} + \sum_{t=1}^{T} \left[ \varphi_{t-1}(\hstar) - \varphi_{t-1}(h_t) \right] + 2\max\left\{ q_{\delta}(T), r_{\delta}(T) \right\} \right]
\end{align*}
with probability no less than $1-4\delta$.
\end{thm}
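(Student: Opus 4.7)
The plan is to combine the anytime identity (\ref{eqn:regret_starting_point}) with a regret bound tailored to AO-FTRL, then arrange for the primal-smoothness piece produced by the AO-FTRL stability term to be cancelled by the negative Bregman contributions already present in $B_T$, leaving only stochastic noise of a form controlled by Lemma \ref{lem:grad_error_highprob}.

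First I would derive the AO-FTRL regret bound. Because $\psi_t = \sum_{i<t}\varphi_i$ is $\strong_t$-strongly convex and (\ref{eqn:raoftrl_update}) is a standard AO-FTRL step, a ``be-the-leader'' argument in the spirit of the proof of Lemma \ref{lem:rftrl_regret}, but crediting the hint $\widetilde{G}_t$ to the ancillary iterate, yields
\begin{align*}
\regret(T;\algo) \leq \psi_T(\hstar) - \sum_{t=1}^{T}\varphi_{t-1}(h_t) + \frac{\alpha_1^2}{2\strong_1}\|\nabla\risk(\hbar_1) - \widetilde{G}_1\|_{\ast}^2 + \sum_{t=2}^{T}\alpha_t \langle \overbar{G}_t - \overbar{G}_{t-1}, h_t - h_{t+1}\rangle.
\end{align*}
Since $\psi_T(\hstar) = \sum_{t=1}^{T}\varphi_{t-1}(\hstar)$, the first two terms already match the regularizer sum in the statement, and the $t=1$ contribution is isolated because the recursion $\widetilde{G}_t = \overbar{G}_{t-1}$ kicks in only from $t\geq 2$.

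Second, for each $t>1$ I would decompose
\begin{align*}
\langle \overbar{G}_t - \overbar{G}_{t-1}, h_t - h_{t+1}\rangle & = \langle \overbar{G}_t - \nabla\risk(\hbar_t), h_t - h_{t+1}\rangle \\
& \quad + \langle \nabla\risk(\hbar_t) - \nabla\risk(\hbar_{t-1}), h_t - h_{t+1}\rangle + \langle \nabla\risk(\hbar_{t-1}) - \overbar{G}_{t-1}, h_t - h_{t+1}\rangle.
\end{align*}
The deterministic middle piece, after Young's inequality with parameter $\strong_t/\alpha_t$ followed by the co-coercivity bound $\|\nabla\risk(u) - \nabla\risk(v)\|_{\ast}^2 \leq 2\smooth\,\bregbase_{\risk}(u;v)$, is dominated by a constant multiple of $(\smooth\alpha_t^2/\strong_t)\,\bregbase_{\risk}(\hbar_{t-1};\hbar_t)$, together with a self-stability term absorbed by the $\strong_t$-strong convexity of $\psi_t$. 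Invoking the weight condition $(\smooth/\strong_t)\alpha_t^2 \leq \alpha_{1:(t-1)}$, the smoothness contribution is majorised by $\alpha_{1:(t-1)}\,\bregbase_{\risk}(\hbar_{t-1};\hbar_t)$, which after an index shift is exactly the quantity subtracted in $B_T$ from (\ref{eqn:regret_starting_point}).

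The two surviving noise inner products, together with the anytime error $\sum_t \alpha_t \langle \overbar{G}_t - \nabla\risk(\hbar_t), \hstar - h_t\rangle$ from (\ref{eqn:regret_starting_point}), are all of the form $\sum_t \alpha_t \langle \overbar{G}_t - \nabla\risk(\hbar_t), h-h'\rangle$ with $h, h' \in \HH$. I would group them into two collections and apply Lemma \ref{lem:grad_error_highprob} once to each; a union bound over the two favourable events produces the total failure probability $\leq 4\delta$ and the prefactor $2$ in front of $\max\{q_\delta(T), r_\delta(T)\}$. The hard part is the tuning in the second step: Young's inequality must be calibrated so that, simultaneously, the primal-smoothness term is dominated by the Bregman subtraction available under the stated weight condition and the self-stability term is dominated by the strong convexity of $\psi_t$, leaving behind only inner products that match the hypotheses of Lemma \ref{lem:grad_error_highprob}.
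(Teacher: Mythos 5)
Your overall architecture mirrors the paper's proof: start from the AO-FTRL regret bound of \citet[Thm.~17]{joulani2020a}, decompose $\overbar{G}_t-\widetilde{G}_t$ into the same three pieces, cancel the primal-smoothness piece against the Bregman subtractions in (\ref{eqn:regret_starting_point}) via the Fenchel--Young / $\strong_t$-strong-convexity / $\smooth$-smoothness chain under the weight condition $(\smooth/\strong_t)\alpha_t^2 \leq \alpha_{1:(t-1)}$, isolate the $t=1$ term, and close with two applications of Lemma~\ref{lem:grad_error_highprob} and a union bound.

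There are, however, two gaps. First, your displayed regret bound is not coherent as stated: the $t=1$ term has already been partially post-processed (which is why $\nabla\risk(\hbar_1)$ rather than $\overbar{G}_1$ appears), while for $t\geq 2$ you have silently dropped the $-\bregbase_{\psi_t}(h_{t+1};h_t)$ term that is present in the Joulani et al.~bound. That negative term is exactly what absorbs the $\tfrac{\strong_t}{2}\|h_t-h_{t+1}\|^2$ piece produced by Young's inequality in your very next paragraph, so the argument cannot proceed from the inequality as written; it needs to be restored.

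Second, and more substantively, the third noise piece is $\sum_{t=2}^{T}\alpha_t\langle\nabla\risk(\hbar_{t-1}) - \overbar{G}_{t-1}, h_t - h_{t+1}\rangle$: weight $\alpha_t$ multiplies the gradient error at time $t-1$. This is \emph{not} of the form $\sum_t \alpha_t \langle \overbar{G}_t - \nabla\risk(\hbar_t), h-h'\rangle$ as you assert, and Lemma~\ref{lem:grad_error_highprob} does not apply to it directly. The paper explicitly flags this and proves a modified statement in which the martingale-difference structure is preserved by requiring the stronger predictability condition $\exx_{[t-2]}\alpha_t = \alpha_t$ almost surely. Without noticing the index shift, your ``apply Lemma~\ref{lem:grad_error_highprob} once to each collection'' step and the resulting $1-4\delta$ probability accounting are not justified.
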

\noindent This theorem can be considered a robust, high-probability analogue of the results in expectation given by \citet[Thm.~3]{joulani2020a}. As such, it can be readily combined with existing regularization techniques \citep[Sec.~4]{joulani2020a} to achieve near-optimal rates under potentially heavy-tailed noise.

\section{Empirical analysis}\label{sec:empirical}

In this section we complement the preceding theoretical analysis with an application of the proposed learning strategy to real-world benchmark datasets. The practical utility of various gradient truncation mechanisms has already been well-studied in the literature  \citep{chen2017a,prasad2018a,lecue2018a,holland2019c}, and thus our chief point of interest here is if and when the feedback scheme utilized in Algorithm \ref{algo:anytime_robust} can outperform the traditional feedback mechanism given by (\ref{eqn:feedback_naive}), under a convex, differentiable true objective. Put more succinctly, the key question is: \textit{is there a practical benefit to querying at points with guarantees?}

\paragraph{Experimental setup}

At a high level, for each dataset of interest, we run multiple independent randomized trials, and for each trial, we run the methods of interest for multiple ``epochs'' (i.e., multiple passes over the data), recording the on-sample (training) and off-sample (testing) performance at the end of each epoch. As a simple and lucid example that implies a convex objective, we use multi-class logistic loss under a linear model; for a dataset with $k$ distinct classes, each predictor returns precisely $k$ scores which are computed as a linear combination of the input features. Thus with $k$ classes and $d_{\text{in}}$ input features, the total dimensionality is $d = kd_{\text{in}}$. For these experiments we run 10 independent trials. Everything is implemented by hand in Python (ver.~3.8), making significant use of the \texttt{numpy} library (ver.~1.20). For each method and each trial, the dataset is randomly shuffled before being split into training and testing subsets. If $n$ is the size of any given dataset, then the training set is of size $n_{\text{tr}} \defeq \lfloor 0.8n \rfloor$, and the test set is of size $n-n_{\text{tr}}$. Within each trial, for each epoch, the training data is also randomly shuffled. For all methods, the step size in update (\ref{eqn:sgd_update}) is fixed at $\beta_t = 2 / \sqrt{n_{\text{tr}}}$, for all steps $t$; this setting is appropriate for \texttt{Anytime-*} methods due to Corollary \ref{cor:anytime_sgd}, and also for \texttt{SGD-Ave} based on standard results such as \citet[Sec.~2.3]{nemirovski2009a}. The $(G_t)$ are obtained by direct computation of the logistic loss gradients, averaged over a mini-batch of size $8$; this size was set arbitrarily for speed and stability, and no other mini-batch values were tested. Furthermore, for each method and each trial, the initial value $h_1$ is randomly generated in a dimension-wise fashion from the uniform distribution on the interval $[-0.05,0.05]$. All raw input features are normalized to the unit interval $[0,1]$ in a per-feature fashion. We do not do any regularization, for any method being tested. We test three different learning procedures: averaged SGD using traditional feedback (\ref{eqn:feedback_naive}) (denoted \texttt{SGD-Ave}), anytime robust SGD precisely as in Algorithm \ref{algo:anytime_robust} and Corollary \ref{cor:anytime_sgd} (denoted \texttt{Anytime-Robust-SGD}), and finally anytime SGD without the robustification sub-routine \texttt{Process} (denoted \texttt{Anytime-SGD}).

\paragraph{Details for \texttt{Anytime-Robust-SGD}}

First, as a simple choice of anchors $\widetilde{h}$ and $\widetilde{g}$, we set $\widetilde{h}=h_1$ and estimate $\widetilde{g}$ using the empirical mean on the training data set; strictly speaking the proper approach is to split the dataset further and dedicate a subset to mean estimation, and the practitioner can easily refine this even further using any of the well-known robust high-dimensional mean estimators, see e.g.~\citet{lugosi2019a}. As for the thresholds $(\thres_t)$ used in the \texttt{Process} sub-routine, we set $\thres_t = \sqrt{n_{\text{tr}}/\log(\delta^{-1})}$ for all $t$, with a confidence level of $\delta = 0.05$ fixed throughout.

\paragraph{Dataset description}

We use eight datasets, identified respectively by the keywords: \texttt{adult},\footnote{\url{https://archive.ics.uci.edu/ml/datasets/Adult}} \texttt{cifar10},\footnote{\url{https://www.cs.toronto.edu/~kriz/cifar.html}} \texttt{cod\_rna},\footnote{\url{https://www.csie.ntu.edu.tw/~cjlin/libsvmtools/datasets/binary.html}} \texttt{covtype},\footnote{\url{https://archive.ics.uci.edu/ml/datasets/covertype}} \texttt{emnist\_balanced},\footnote{\url{https://www.nist.gov/itl/products-and-services/emnist-dataset}} \texttt{fashion\_mnist},\footnote{\url{https://github.com/zalandoresearch/fashion-mnist}} \texttt{mnist},\footnote{\url{http://yann.lecun.com/exdb/mnist/}} and \texttt{protein}.\footnote{\url{https://www.kdd.org/kdd-cup/view/kdd-cup-2004/Data}} See Table \ref{table:datasets} for a summary. Further background on all datasets is available at the URLs provided in the footnotes. Dataset size reflects the size after removal of instances with missing values, where applicable. For all datasets with categorical features, the ``input features'' given in Table \ref{table:datasets} represents the number of features after doing a one-hot encoding of all such features.

\begin{table*}[t!]
\begin{center}
\begin{tabular}{|l|l|l|l|l|}
\hline
Dataset & Size & Input features & Number of classes & Model dimension \\
\hline\hline
\texttt{adult} & 45,222 & 105 & 2 & 210\\
\hline
\texttt{cifar10} & 60,000 & 3,072 & 10 & 30,720\\
\hline
\texttt{cod\_rna} & 331,152 & 8 & 2 & 16\\
\hline
\texttt{covtype} & 581,012 & 54 & 7 & 378\\
\hline
\texttt{emnist\_balanced} & 131,600 & 784 & 47 & 36,848\\
\hline
\texttt{fashion\_mnist} & 70,000 & 784 & 10 & 7,840\\
\hline
\texttt{mnist} & 70,000 & 784 & 10 & 7,840\\
\hline
\texttt{protein} & 145,751 & 74 & 2 & 148\\
\hline
\end{tabular}
\end{center}
\caption{A summary of the benchmark datasets used for performance evaluation.}
\label{table:datasets}
\end{table*}

\paragraph{Software}

All code required to pre-process the data, run the experiments, and re-create the figures in this paper is available at: \url{https://github.com/feedbackward/anytime}

\paragraph{Results and discussion}

Our results are summarized in Figure \ref{fig:logistic_mean}, which plots the average training and test losses. For each trial, losses are averaged over datasets, and these average losses are themselves averaged over all trials to obtain the values plotted here. The impact of using feedback with guarantees is immediate; in all cases, we see a notable boost in learning efficiency. This positive effect holds essentially uniformly across the datasets used, with no hyperparameter tuning. For CIFAR-10, we observe that the robustified version performs worse than than vanilla anytime averaged SGD; this looks to be due to the simple $\widetilde{h}=h_1$ setting, and can be readily mitigated by updating $\widetilde{h}$ after one pass over the data. It is reasonable to conjecture that if we were to shift to more complex non-linear models, from the resulting lack of convexity in the objective, there might emerge a tradeoff between the stability encouraged by Algorithm \ref{algo:anytime_robust}, and the benefits of parameter space exploration that are incidental to the noisier gradients arising under (\ref{eqn:feedback_naive}).

\begin{figure*}[t!]
\centering
\includegraphics[width=0.65\textwidth]{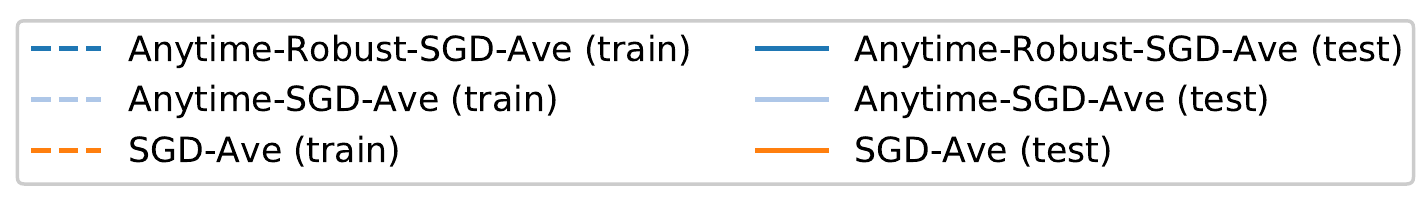}\\
\includegraphics[width=0.25\textwidth]{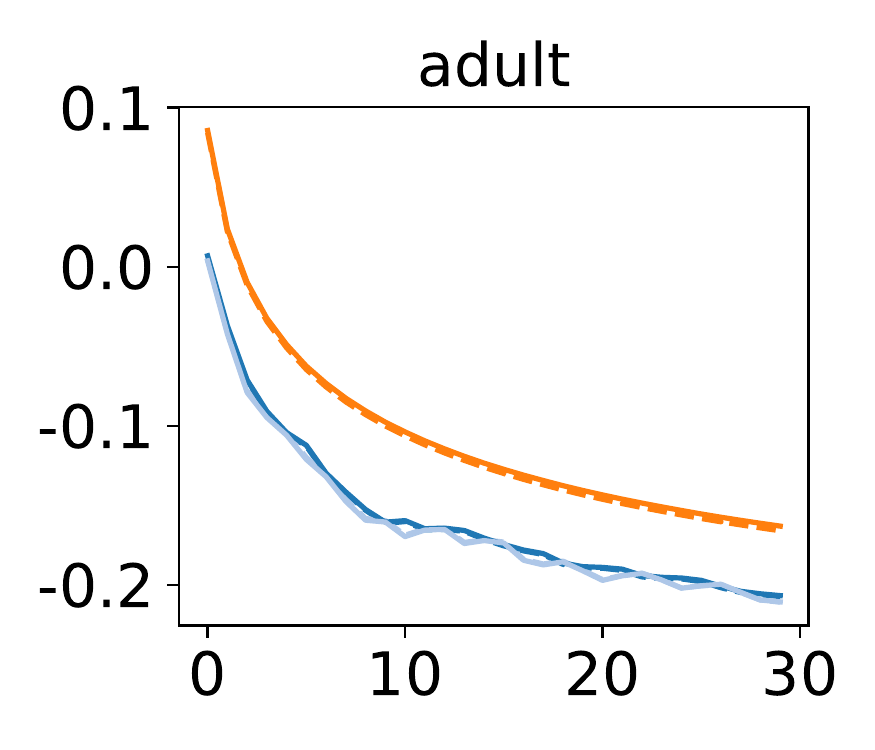}\,\includegraphics[width=0.25\textwidth]{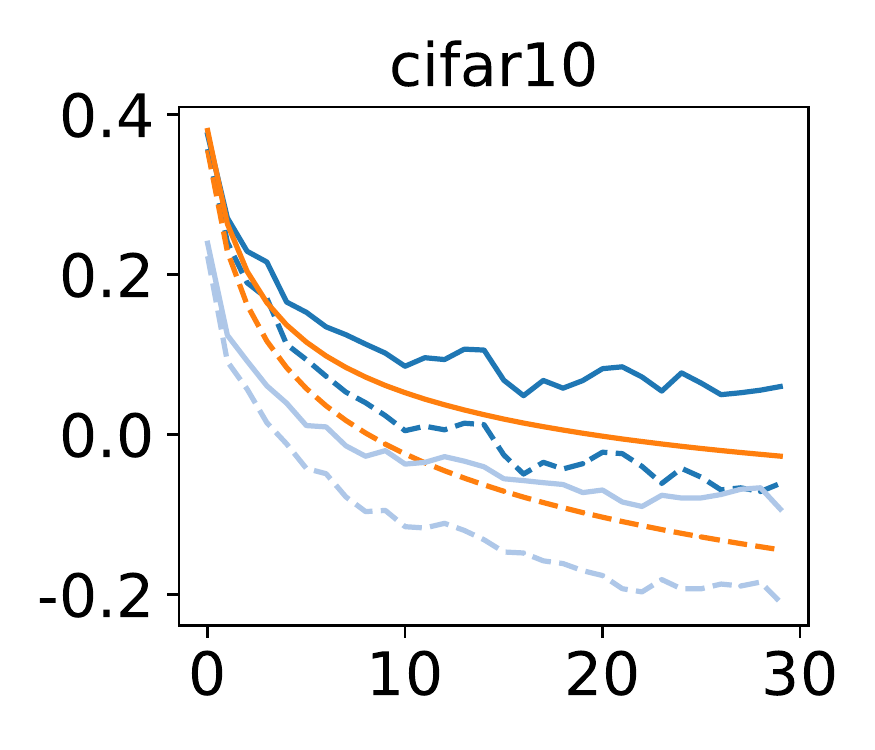}\,\includegraphics[width=0.25\textwidth]{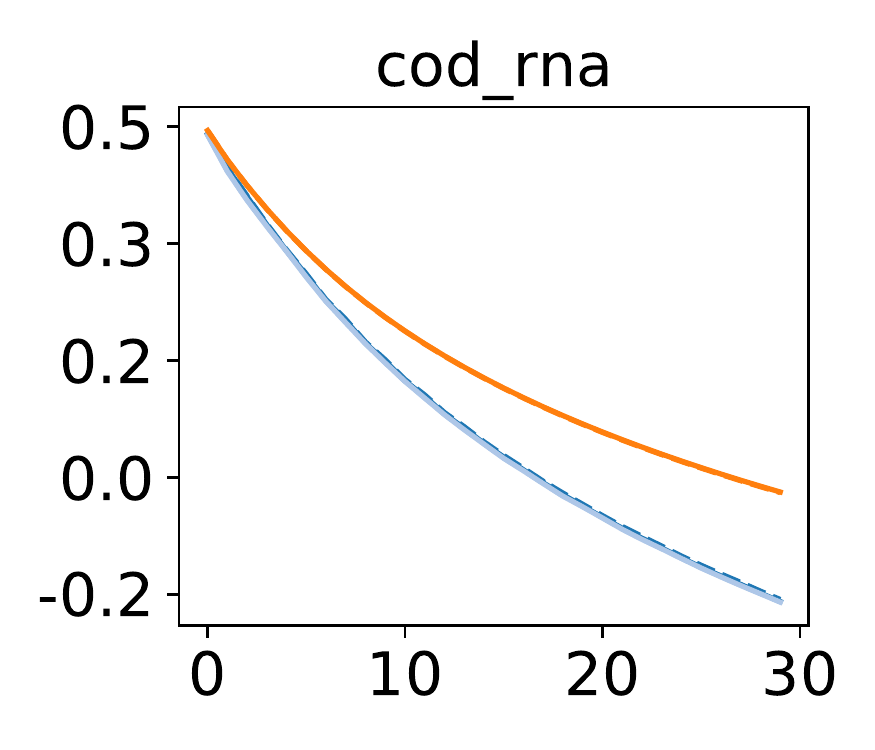}\,\includegraphics[width=0.25\textwidth]{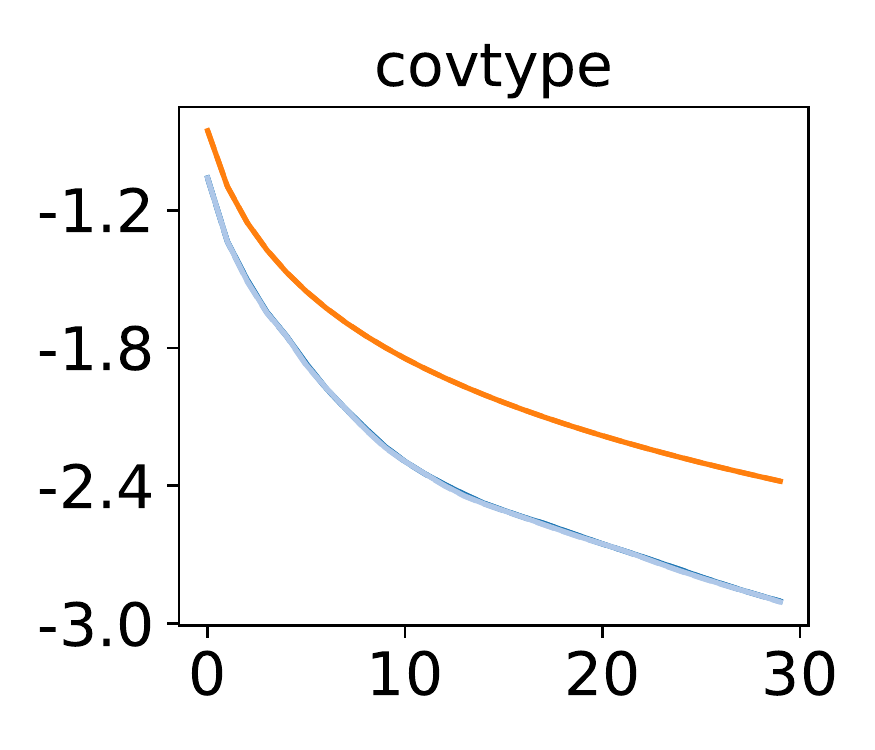}\\
\includegraphics[width=0.25\textwidth]{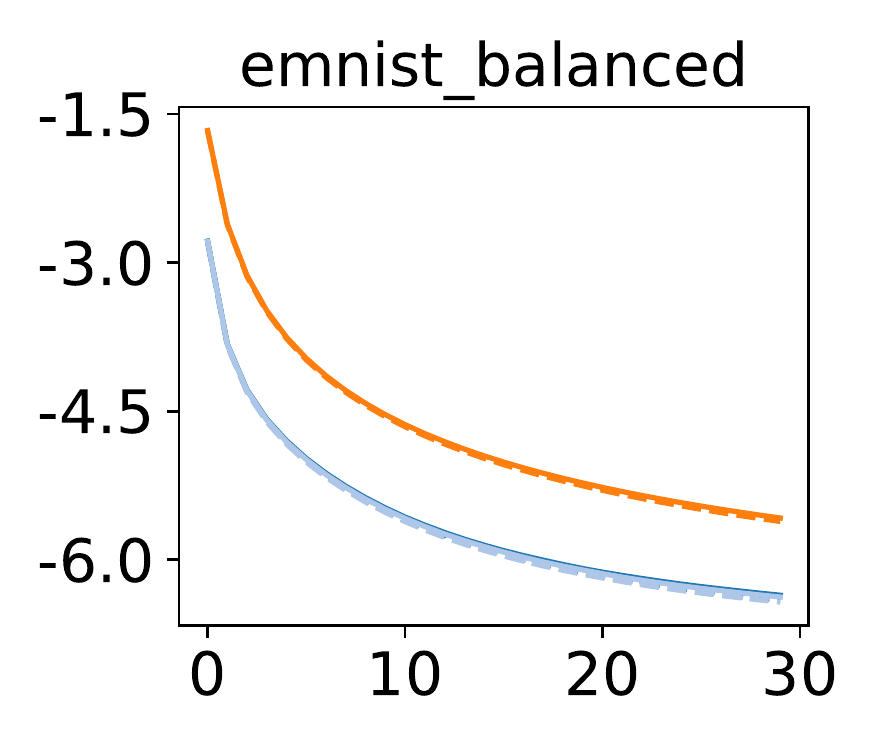}\,\includegraphics[width=0.25\textwidth]{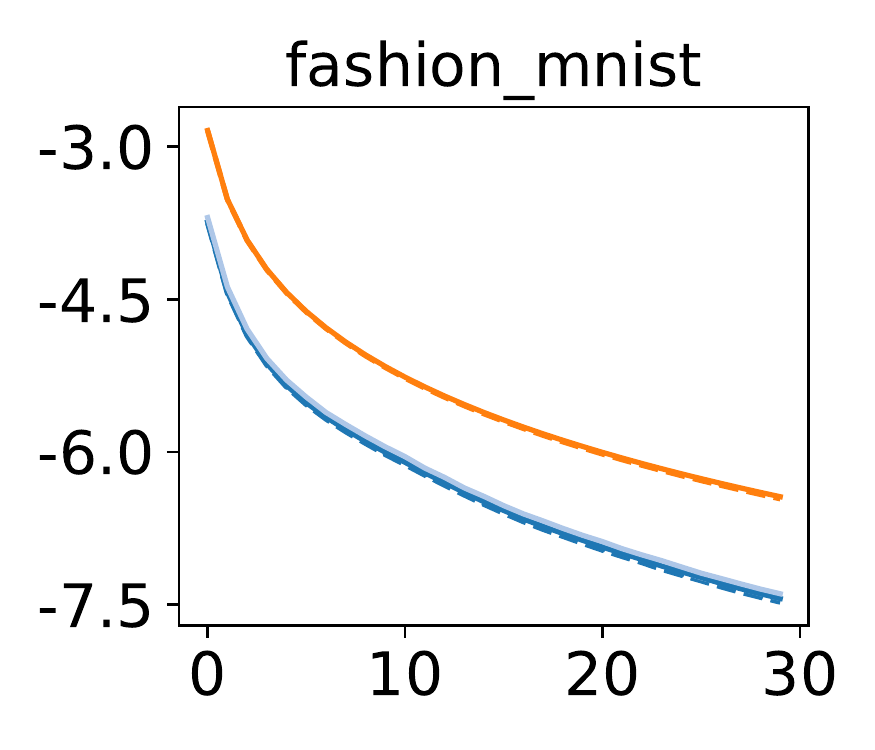}\,\includegraphics[width=0.25\textwidth]{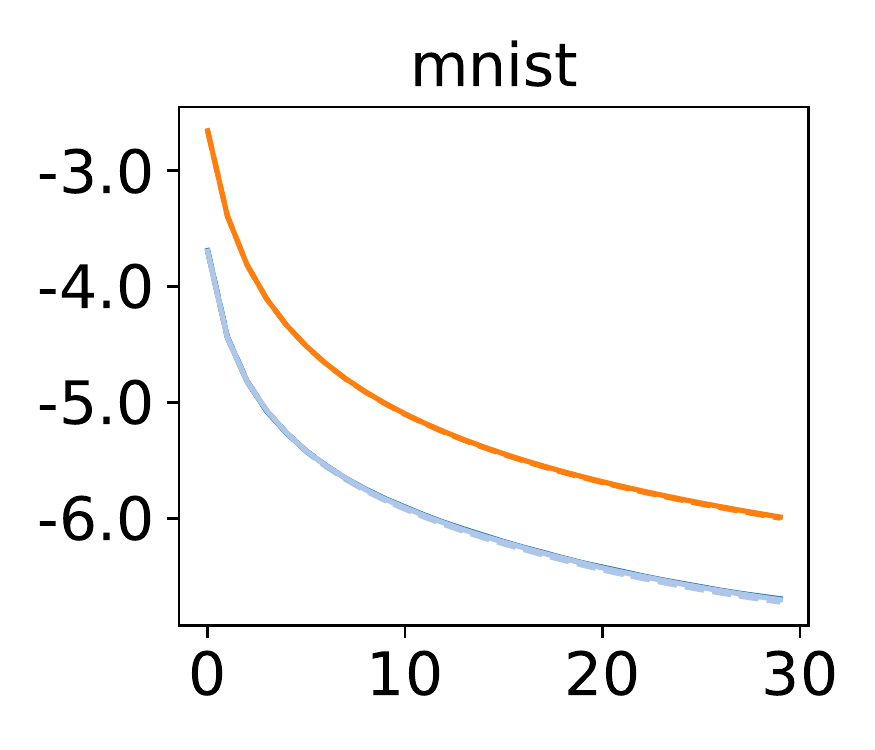}\,\includegraphics[width=0.25\textwidth]{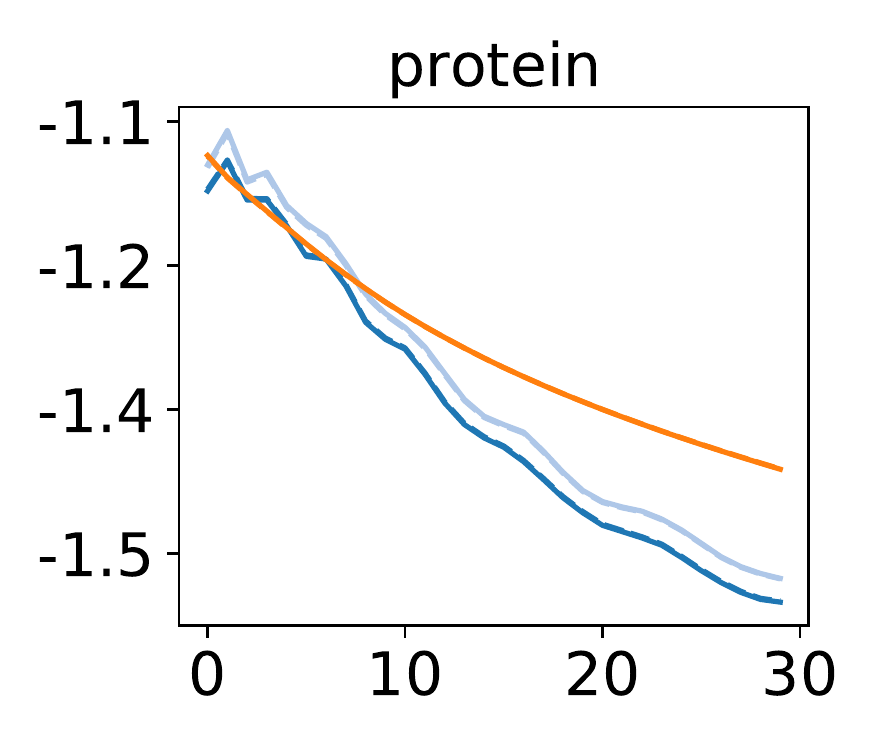}
\caption{Training and test loss versus epoch number, averaged over all trials, for each method. The eight individual plot titles correspond to dataset names.}
\label{fig:logistic_mean}
\end{figure*}

\section{Future directions}\label{sec:future}

From a technical perspective, the most salient direction moving forward is strengthening the robust estimation sub-routines to reduce the amount of prior knowledge required, and to potentially extend the methodology to cover non-smooth $\risk$. The requirement of a bounded domain can be removed (in the non-anytime setting) by using a more sophisticated update procedure \citep{gorbunov2020a}, and extending insights of this nature to refine or modify the procedure used to obtain Lemma \ref{lem:grad_error_highprob} is of natural interest. In most learning tasks of interest, the variance of the underlying feedback distribution may change significantly, and an adaptive strategy for setting $(c_t)$ is of interest both for strengthening formal guarantees and improving efficiency and stability in practice.

\appendix

\section{Technical appendix}

\subsection{Martingale concentration inequality}

Here we give a convenient form of Bernstein's inequality adapted to martingales; see \citet[Lem.~A.8]{cesa2006Games} for background and a concise proof.
\begin{lem}[Bernstein's inequality for martingales]\label{lem:bernstein}
Let $(V_t)$ be a martingale difference sequence with respect to the filtration $(\FF_t)$, bounded as $|V_t| \leq B$. Denote the partial sum of conditional variances by $\Sigma_{T}^{2} \defeq \sum_{t=1}^{T} \exx[ V_t \cond \FF_{t-1}]$. Under these assumptions, we have
\begin{align*}
\prr\left[ \left\{ \max_{t \in [T]} \sum_{i=1}^{t}V_i > \sqrt{2 \gamma_1 \gamma_2} + \frac{\sqrt{2}}{3}B\gamma_1 \right\} \bigcap \left\{\Sigma_{T}^{2} \leq \gamma_2 \right\} \right] \leq \exp(-\gamma_1)
\end{align*}
for any integer $T>0$ and real values $\gamma_1,\gamma_2 > 0$.
\end{lem}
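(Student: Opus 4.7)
The plan is to follow the standard Chernoff-type approach for martingale concentration, combined with truncation via the event $\{\Sigma_T^2\leq \gamma_2\}$ so that the (otherwise random) quadratic variation enters the bound only through its deterministic upper estimate. Throughout, I read $\Sigma_T^2$ in the statement as $\sum_{t=1}^{T}\exx[V_t^2\mid \FF_{t-1}]$, the conventional sum of conditional second moments used in Bernstein-type martingale bounds (the $V_t$ are conditionally centred, so the literal sum of $\exx[V_t\mid \FF_{t-1}]$ is zero).

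The workhorse would be the one-step MGF bound: for any mean-zero $V$ with $|V|\leq B$ and $\exx V^2=\sigma^2$, $\exx[e^{\lambda V}]\leq \exp\!\bigl(\sigma^2\lambda^2 \phi(\lambda B)\bigr)$ for $0<\lambda<3/B$, with $\phi(u)=(e^u-1-u)/u^2$ and the convenient overestimate $\phi(u)\leq 1/(2(1-u/3))$. Applying this conditionally on $\FF_{t-1}$ to each $V_t$ shows that $M_t(\lambda)\defeq \exp\!\bigl(\lambda S_t -\lambda^2 \phi(\lambda B)\,\Sigma_t^2\bigr)$, with $S_t\defeq\sum_{i\leq t}V_i$, is a nonnegative supermartingale satisfying $\exx M_0(\lambda)=1$.

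Next, apply Doob's maximal inequality: $\prr\bigl[\max_{t\leq T} M_t(\lambda)\geq e^{\gamma_1}\bigr]\leq e^{-\gamma_1}$. On the event $\{\Sigma_T^2\leq \gamma_2\}$ the subtracted term in the exponent is at most $\lambda^2 \phi(\lambda B)\gamma_2$, so
\[
\Bigl\{\max_{t\leq T} S_t \geq x,\ \Sigma_T^2\leq \gamma_2\Bigr\}\ \subseteq\ \Bigl\{\max_{t\leq T} M_t(\lambda)\geq \exp\bigl(\lambda x - \lambda^2 \phi(\lambda B)\gamma_2\bigr)\Bigr\},
\]
whose probability is bounded by $\exp\bigl(-\lambda x + \lambda^2 \phi(\lambda B)\gamma_2\bigr)$. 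Using $\phi(u)\leq 1/(2(1-u/3))$ and the standard Bernstein optimizer $\lambda=\sqrt{2\gamma_1/\gamma_2}\big/\bigl(1+B\sqrt{2\gamma_1/\gamma_2}/3\bigr)$ collapses the exponent to $-\gamma_1$ exactly at $x=\sqrt{2\gamma_1\gamma_2}+\tfrac{\sqrt{2}}{3}B\gamma_1$.

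The main nuisance is bookkeeping the constants: the $\sqrt{2\gamma_1\gamma_2}$ piece falls out of completing the square in $\lambda$ once $\phi$ is replaced by its leading coefficient $1/2$, but the particular coefficient $\sqrt{2}/3$ on the linear-in-$B$ term (rather than the slightly different constants appearing elsewhere in the literature) requires carefully tracking the $1-u/3$ factor in the overestimate of $\phi$ through the optimization. Since the statement explicitly attributes the lemma to \citet[Lem.~A.8]{cesa2006Games}, a perfectly acceptable alternative — and likely what the author does — is to simply cite that reference and skip the calculation altogether.
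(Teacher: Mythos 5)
The paper itself does not prove this lemma---it simply cites \citet[Lem.~A.8]{cesa2006Games}---so your closing suggestion to do exactly that matches the paper's approach. Your Freedman-style supermartingale sketch is correct in outline, and you correctly spotted that $\Sigma_T^2$ must be read as $\sum_{t}\exx[V_t^2\cond\FF_{t-1}]$ (a typo in the statement, since the literal sum of conditional means vanishes for a martingale difference sequence); the only small imprecision is that with the overestimate $\phi(u)\leq 1/(2(1-u/3))$ and the stated $x$, the Chernoff exponent comes out at most $-\gamma_1$ rather than exactly $-\gamma_1$ (the slack is proportional to $\sqrt{2}-1$), which of course only helps.
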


\subsection{Convexity and smoothness}

We say that a function $f:\VV \to \overbar{\RR}$ is \term{convex} if for all $0 < \alpha < 1$ and $u,v \in \VV$, we have
\begin{align}\label{eqn:convexity}
f(\alpha u + (1-\alpha)v) \leq \alpha f(u) + (1-\alpha)f(v).
\end{align}
We say that $f$ is \term{strictly} convex if for all $0 < \alpha < 1$ and $u,v \in \VV$ such that $u \neq v$, we have
\begin{align}\label{eqn:convexity_strict}
f(\alpha u + (1-\alpha)v) < \alpha f(u) + (1-\alpha)f(v).
\end{align}
Finally, we say that $f$ is \term{$\strong$-strongly convex} (with respect to norm $\|\cdot\|$) if there exists some $\strong > 0$ such that for all $0 < \alpha < 1$ and $u,v \in \VV$, we have
\begin{align}
\label{eqn:convexity_strong}
f(\alpha u + (1-\alpha)v) \leq \alpha f(u) + (1-\alpha)f(v) - \frac{\strong\alpha(1-\alpha)}{2}\|u-v\|^{2}.
\end{align}
Strongly convex functions can be shown to satisfy a smoothness property with respect to the dual norm.
\begin{lem}[Duality of strong convexity and smoothness]\label{lem:sc_smooth_duality}
Let $f:\VV \to \overbar{\RR}$ be $\strong$-strongly convex with respect to $\|\cdot\|$. Then, for any $u,v \in \dom f$, we have
\begin{align*}
f(u) - f(v) \leq \langle \partial f(v), u-v \rangle + \frac{1}{2\strong}\|\partial f(u) - \partial f(v)\|_{\ast}^{2}.
\end{align*}
\end{lem}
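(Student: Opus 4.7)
The plan is to reduce the claim to a simple application of the subgradient form of strong convexity plus a Young-type inequality, using the standard auxiliary-function trick. Fix any $u^{\ast} \in \partial f(u)$ and $v^{\ast} \in \partial f(v)$; the statement (interpreted in the multi-valued sense indicated in the preliminaries) amounts to the Bregman-type inequality
\begin{align*}
f(u) - f(v) - \langle v^{\ast}, u-v \rangle \leq \frac{1}{2\strong}\|u^{\ast} - v^{\ast}\|_{\ast}^{2},
\end{align*}
with the left-hand side being $\bregbase_{f}(u;v)$.

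First I would upgrade the pointwise definition (\ref{eqn:convexity_strong}) to the subgradient form: for any $w \in \dom f$ and any $v^{\ast} \in \partial f(v)$,
\begin{align*}
f(w) \geq f(v) + \langle v^{\ast}, w-v \rangle + \frac{\strong}{2}\|w-v\|^{2}.
\end{align*}
This is classical and follows by dividing (\ref{eqn:convexity_strong}) evaluated at $w$ and $v$ by $\alpha$, rearranging, and letting $\alpha \downarrow 0$ using the definition of the sub-differential. Next, introduce the perturbed function $g(w) \defeq f(w) - \langle u^{\ast}, w \rangle$. Since a linear shift does not alter strong convexity, $g$ is still $\strong$-strongly convex, and since $u^{\ast} \in \partial f(u)$ we have $0 \in \partial g(u)$, so $u$ is a global minimizer of $g$. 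Applying the subgradient-form strong convexity to $g$ at the minimizer $u$ with test point $v$ (so the zero subgradient of $g$ at $u$ enters) gives $g(v) \geq g(u) + (\strong/2)\|u-v\|^{2}$, which unpacks to
\begin{align*}
f(u) - f(v) \leq \langle u^{\ast}, u-v \rangle - \frac{\strong}{2}\|u-v\|^{2}.
\end{align*}

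Finally, split $\langle u^{\ast}, u-v \rangle = \langle v^{\ast}, u-v \rangle + \langle u^{\ast}-v^{\ast}, u-v \rangle$, bound the second term by $\|u^{\ast}-v^{\ast}\|_{\ast}\|u-v\|$ via the dual-norm inequality, and then apply Young's inequality $ab \leq a^{2}/(2\strong) + \strong b^{2}/2$ with $a = \|u^{\ast}-v^{\ast}\|_{\ast}$ and $b = \|u-v\|$. The two $(\strong/2)\|u-v\|^{2}$ contributions cancel exactly, leaving the desired inequality.

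The calculation is routine; the only conceptual subtlety is the auxiliary-function step, so that is where I would focus care in the write-up. In particular, one should note that the result is vacuous if either sub-differential is empty, and otherwise holds uniformly over the choices $u^{\ast} \in \partial f(u)$, $v^{\ast} \in \partial f(v)$, consistent with the paper's convention for statements involving multi-valued operators. No appeal to the Fenchel conjugate or reflexivity is needed for this direction of the duality.
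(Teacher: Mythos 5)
Your proof is correct. Note, however, that the paper does not actually spell out a proof of Lemma~\ref{lem:sc_smooth_duality}; it simply defers to \citet{kakade2009b} and \citet[Lem.~7.5]{orabona2020a}, remarking that the Euclidean argument ``generalizes easily to arbitrary normed linear spaces.'' Your argument supplies precisely what that remark leaves implicit: the upgrade from the midpoint definition (\ref{eqn:convexity_strong}) to the first-order form is clean (the limit $\alpha \downarrow 0$ is purely algebraic once the sub-differential inequality at $v$ is used to absorb $f(\alpha u + (1-\alpha)v)$, so no interiority or continuity hypotheses are needed beyond $u,v \in \dom f$); the auxiliary-function shift $g = f - \langle u^{\ast}, \cdot \rangle$ with $0 \in \partial g(u)$ is the standard trick and is almost certainly what the cited Orabona lemma does; and the closing step uses only the dual-norm pairing bound and Young's inequality, so nothing Euclidean-specific (no inner-product polarization, no Fenchel conjugate, no reflexivity) is invoked, which is exactly what the generalization requires. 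Your observation that the statement is vacuous when either sub-differential is empty is also the right reading of the paper's multi-valued notation convention. In short, the proposal is sound and faithfully fills the gap the paper hands off to the literature.
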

This is a well-known fact based on duality relations between strong convexity and smoothness; see \citet{kakade2009b} and the references therein for background, or \citet[Lem.~7.5]{orabona2020a} for a more direct proof in the case of Euclidean inner product spaces (the result generalizes easily to arbitrary normed linear spaces).

\subsection{Mirror descent}\label{sec:ax_mirror}

In this sub-section, we prepare some technical results related to mirror descent procedures.\footnote{For some highly readable references, see for example \citet[Ch.~4, 6]{bubeck2015a} and \citet[Ch.~6]{orabona2020a}.} In what follows, our running assumptions will be that $(\VV,\|\cdot\|)$ is a real Banach space, $U \subset \VV$ is a non-empty, open, and convex set, and finally $\Phi:\VV \to \RR$ is strictly convex and Gateaux-differentiable on $U$.\footnote{This in particular implies that for any $u \in U$, the sub-differential of $\Phi$ at $u$ consists of a single element \citep[Prop.~2.40]{barbu2012ConvOptBanach}. We denote this by $\partial\Phi(u) = \{\nabla\Phi(u)\}$.} By $\breg: U \times U \to \RR$, we denote the \term{Bregman divergence} induced by $\Phi$, defined for any $u,v \in U$ as
\begin{align}
\breg(u;v) \defeq \Phi(u) - \Phi(v) - \langle \nabla\Phi(v), u - v \rangle.
\end{align}
From the strict convexity of $\Phi$, we have that $\breg(u;v) = 0$ if and only if $u=v$. The gradient of $\breg(\cdot;v)$ takes the simple form $\nabla\breg(u;v) = \nabla\Phi(u)-\nabla\Phi(v)$. For any $u,v,w \in U$, we have the following useful relation:\footnote{See \citet[Lem.~3.1]{chen1993a} for a simple proof and additional context.}
\begin{align}
\label{eqn:bregman_threepoint}
\langle \nabla\Phi(u)-\nabla\Phi(v) , u-w \rangle = \breg(u;v) + \breg(w;u) - \breg(w;v).
\end{align}
If $\Phi$ happens to be $\strong$-strongly convex, then
\begin{align}\label{eqn:bregman_sc}
\breg(u;v) \geq \frac{\strong}{2}\|u-v\|^{2}
\end{align}
for any choice of $u,v \in U$. The following lemma organizes several useful technical facts related to minimizers of functions derived from $\breg$.
\begin{lem}\label{lem:helper_md_general}
Set $U = \VV$, and let $f$ be any continuous linear functional on $\VV$. For any $v \in \VV$, consider the function $F_v:\VV \to \RR$ defined as $F_v(u) \defeq f(u) + \breg(u;v)$. Let $W \subset \VV$ be a closed and convex set. Assuming that $F_v$ takes a minimum value on both $\VV$ and $W$, we denote these respective minimizers by
\begin{align*}
\widetilde{v}^{\ast} \defeq \argmin_{w \in W} F_v(w), \qquad v^{\ast} \defeq \argmin_{u \in \VV} F_v(u).
\end{align*}
Under these assumptions, the following properties hold:
\begin{align}
\label{eqn:helper_md_general_res0}
\langle \nabla\Phi(v^{\ast}), u \rangle & = \langle \nabla\Phi(v)-\nabla f(w), u \rangle, \text{ for all } u,w \in \VV\\
\label{eqn:helper_md_general_res1}
\widetilde{v}^{\ast} & = \argmin_{w \in W} \breg(w;v^{\ast})\\
\label{eqn:helper_md_general_res2}
0 & \geq \langle \nabla\Phi(\widetilde{v}^{\ast})-\nabla\Phi(v), \widetilde{v}^{\ast}-w \rangle, \text{ for all } w \in W.
\end{align}
\end{lem}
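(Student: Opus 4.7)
The three claims are layered and I would dispatch them in order. The guiding idea is that the unconstrained minimizer $v^{\ast}$ lets me reabsorb the linear functional $f$ into a recentering of the Bregman divergence, so the constrained problem collapses into a pure Bregman projection onto $W$.

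For \eqref{eqn:helper_md_general_res0}, I would first observe that since $\Phi$ is Gateaux-differentiable on $U = \VV$ and $f$ is continuous linear (hence Gateaux-differentiable with derivative equal to itself at every base point, i.e.\ $\nabla f(w) = \nabla f(w')$ for all $w,w' \in \VV$), the objective $F_v$ is convex and Gateaux-differentiable on all of $\VV$. The unconstrained minimizer $v^{\ast}$ is therefore a critical point: $\nabla F_v(v^{\ast}) = 0$. Using $\nabla_u \breg(u;v) = \nabla\Phi(u) - \nabla\Phi(v)$ (differentiating only in the first slot), this rearranges to $\nabla\Phi(v^{\ast}) = \nabla\Phi(v) - \nabla f(w)$; pairing with an arbitrary direction $u \in \VV$ yields \eqref{eqn:helper_md_general_res0}.

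For \eqref{eqn:helper_md_general_res1}, I would show that $F_v$ and $\breg(\cdot;v^{\ast})$ differ only by a $w$-independent constant. Expanding both via the definition of the Bregman divergence and substituting the identity $\nabla\Phi(v^{\ast}) = \nabla\Phi(v) - f$ from the previous step, a direct check shows that the $\Phi(w)$ terms cancel and the linear-in-$w$ contributions $\langle f,w\rangle$, $\langle -\nabla\Phi(v), w\rangle$, and $\langle -\nabla\Phi(v^{\ast}),w\rangle$ collapse to zero, leaving only a constant depending on $v$ and $v^{\ast}$. Hence the two functions share identical argmin over any set $W$, giving \eqref{eqn:helper_md_general_res1}.

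For \eqref{eqn:helper_md_general_res2}, I would invoke the standard first-order variational inequality for the minimum of a convex, Gateaux-differentiable function over a convex set. By \eqref{eqn:helper_md_general_res1}, $\widetilde{v}^{\ast}$ minimizes $\breg(\cdot;v^{\ast})$ over the convex set $W$, so $\langle \nabla\Phi(\widetilde{v}^{\ast}) - \nabla\Phi(v^{\ast}), w - \widetilde{v}^{\ast} \rangle \geq 0$ for every $w \in W$; negating gives the claimed inequality, with the natural reading that the anchor in \eqref{eqn:helper_md_general_res2} refers to the recentered point $v^{\ast}$ furnished by \eqref{eqn:helper_md_general_res0}. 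The argument presents no genuine obstacle: each step is a direct consequence of either the definition of $\breg$, the linearity of $f$, or first-order optimality. The only point to watch is the careful identification of the continuous linear functional $f$ with its (constant) Gateaux derivative, which is what makes the $w$ in \eqref{eqn:helper_md_general_res0} arbitrary.
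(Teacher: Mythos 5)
Your proposal is correct and follows essentially the same three-step route as the paper: unconstrained first-order optimality at $v^{\ast}$ for \eqref{eqn:helper_md_general_res0}, the observation that $F_v(\cdot)$ and $\breg(\cdot\,;v^{\ast})$ agree up to an additive constant for \eqref{eqn:helper_md_general_res1} (the paper makes this explicit through the identity $\breg(u;v^{\ast}) = F_v(u) - F_v(v^{\ast})$ and a comparison of minima, but the substance is identical), and the constrained first-order variational inequality for \eqref{eqn:helper_md_general_res2}.

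You also correctly flagged a subtle discrepancy: \eqref{eqn:helper_md_general_res2} as printed carries the anchor $\nabla\Phi(v)$, but the statement that actually follows from first-order optimality—and the one consumed downstream in the proof of Lemma~\ref{lem:rsmd_regret}, after substituting $\nabla\Phi(h_t^{\prime}) = \nabla\Phi(h_t) - \beta_t\overbar{G}_t$—uses $\nabla\Phi(v^{\ast})$, equivalently $\nabla\Phi(v) - \nabla f$. The paper's own proof invokes $\langle \nabla\breg(\widetilde{v}^{\ast};v), w - \widetilde{v}^{\ast}\rangle \geq 0$ as ``a standard optimality condition,'' but $\widetilde{v}^{\ast}$ minimizes $\breg(\cdot\,;v^{\ast})$ over $W$, not $\breg(\cdot\,;v)$; the correct condition is $\langle \nabla\breg(\widetilde{v}^{\ast};v^{\ast}), w - \widetilde{v}^{\ast}\rangle \geq 0$, or directly $\langle \nabla F_v(\widetilde{v}^{\ast}), w - \widetilde{v}^{\ast}\rangle \geq 0$, both of which agree with your reading via \eqref{eqn:helper_md_general_res0}. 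So your ``natural reading'' recovers the intended and actually-used result, while the $v$ in the printed statement and proof is an error. One cosmetic slip in your cancellation check for \eqref{eqn:helper_md_general_res1}: the third linear-in-$w$ term enters with sign $+\langle \nabla\Phi(v^{\ast}), w\rangle$, not $-$; the combined coefficient $f - \nabla\Phi(v) + \nabla\Phi(v^{\ast})$ then vanishes by \eqref{eqn:helper_md_general_res0}, so the conclusion is unaffected.
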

The preceding lemma is quite general, but rests upon the assumption that $F_v(\cdot)$ takes its minimum on both $W$ and $\VV$. The following lemma treats the special case of (anytime robust) SMD, and gives conditions under which these minima are well-defined.
\begin{lem}\label{lem:helper_rsmd}
Let $\VV$ be a reflexive Banach space, with $\HH \subset \VV$ closed and bounded. Letting $(h_t)$ be the SMD sequence generated by (\ref{eqn:rsmd_update_proximal}), and defining
\begin{align*}
h_{t}^{\prime} \defeq \argmin_{h \in \VV} \left[ \langle \overbar{G}_t,h \rangle + \frac{1}{\beta_t}\breg(h;h_t) \right],
\end{align*}
both of these sequences are well-defined, and can be related by
\begin{align}
\label{eqn:rsmd_update_dual}
\nabla\Phi(h_t^{\prime}) & = \nabla\Phi(h_t) - \beta_t \overbar{G}_t\\
\label{eqn:rsmd_update_proj}
h_{t+1} & = \argmin_{h \in \HH} \breg(h;h_{t}^{\prime})
\end{align}
for all steps $t > 0$.
\end{lem}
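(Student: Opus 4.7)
The plan is to reduce both claims to Lemma \ref{lem:helper_md_general} applied with $f \defeq \beta_t\langle \overbar{G}_t,\cdot \rangle$, $v = h_t$, and $W = \HH$. Multiplying the SMD objective through by $\beta_t > 0$ does not affect the set of minimizers, so the problem of minimizing $\langle \overbar{G}_t, h \rangle + \frac{1}{\beta_t}\breg(h;h_t)$ over $\HH$ (resp.~$\VV$) is equivalent to minimizing $F_{h_t}(h) = f(h) + \breg(h;h_t)$ in the notation of that lemma; since $f$ is a continuous linear functional on $\VV$, the setup matches verbatim.

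The first task is to verify that the two minima defining $h_{t+1}$ and $h_t^{\prime}$ are actually attained. For $h_{t+1}$: since $\HH$ is closed, convex, and bounded in a reflexive Banach space, it is weakly compact; the objective $F_{h_t}$ is convex and norm-lower semi-continuous, hence weakly lower semi-continuous, so it attains its infimum on $\HH$. Uniqueness follows from strict convexity of $\Phi$, which propagates to strict convexity of $\breg(\cdot;h_t)$ and hence of $F_{h_t}$. For $h_t^{\prime}$: here one invokes the $\strong$-strong convexity of $\Phi$ (from the ambient assumptions of Lemma \ref{lem:rsmd_regret}) to obtain $\breg(h;h_t) \geq \frac{\strong}{2}\|h-h_t\|^{2}$ via (\ref{eqn:bregman_sc}), which makes $F_{h_t}$ coercive on $\VV$. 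Combining coercivity with weak lower semi-continuity and reflexivity yields existence of a minimizer on the whole space, and strict convexity once again gives uniqueness.

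With existence settled, both conclusions fall out of Lemma \ref{lem:helper_md_general}. For the dual relation (\ref{eqn:rsmd_update_dual}), apply (\ref{eqn:helper_md_general_res0}) with $v = h_t$ and any convenient $w$; noting $\nabla f = \beta_t \overbar{G}_t$, we can read off $\nabla\Phi(h_t^{\prime}) = \nabla\Phi(h_t) - \beta_t \overbar{G}_t$ as dual elements acting identically on every $u \in \VV$. The projection identity (\ref{eqn:rsmd_update_proj}) is then a direct restatement of (\ref{eqn:helper_md_general_res1}) with $\widetilde{v}^{\ast} = h_{t+1}$ and $v^{\ast} = h_t^{\prime}$.

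The only delicate point, and the main obstacle, is the unconstrained existence of $h_t^{\prime}$: reflexivity and weak lower semi-continuity alone are not enough once we drop the bounded feasible set, so coercivity of $F_{h_t}$ must be argued explicitly, and this in turn requires that strong convexity of $\Phi$ be available on all of $\VV$ (or at least on a set large enough to contain the entire level set of $F_{h_t}$ anchored at $h_t$). This is a mild but non-negligible assumption that is inherited from the surrounding context and should be flagged when invoking the lemma.
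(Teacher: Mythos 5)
Your proposal is correct and reaches the same two conclusions, but your route to the unconstrained existence of $h_t^{\prime}$ differs from the paper's. The paper establishes existence via the Fenchel conjugate: it writes down the first-order condition $\nabla\Phi(h_t^{\prime}) = \nabla\Phi(h_t) - \beta_t \overbar{G}_t$, translates it into $h_t^{\prime} \in \partial\Phi^{\ast}(\nabla\Phi(h_t) - \beta_t \overbar{G}_t)$ using reflexivity of $\VV$, and then invokes strong convexity of $\Phi$ to conclude that $\Phi^{\ast}$ is everywhere differentiable, so this subdifferential is a nonempty singleton. This argument simultaneously delivers existence, uniqueness, and the dual relation (\ref{eqn:rsmd_update_dual}) in one stroke, and it only calls on Lemma \ref{lem:helper_md_general} for the projection identity (\ref{eqn:rsmd_update_proj}). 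You instead argue existence by coercivity ($\breg(h;h_t) \geq \frac{\strong}{2}\|h-h_t\|^2$ dominates the linear term), then lower semi-continuity plus reflexivity, then strict convexity for uniqueness; afterward you obtain \emph{both} (\ref{eqn:rsmd_update_dual}) and (\ref{eqn:rsmd_update_proj}) by appealing to (\ref{eqn:helper_md_general_res0}) and (\ref{eqn:helper_md_general_res1}) respectively. Your approach is more elementary and makes fuller use of the helper lemma; the paper's is slicker in that it gets (\ref{eqn:rsmd_update_dual}) essentially for free while settling existence. Both arguments genuinely need strong convexity of $\Phi$ on (a sufficiently large subset of) $\VV$, not merely on $\HH$ as stated in Lemma \ref{lem:rsmd_regret} — you are right to flag this, and the same implicit assumption sits underneath the paper's use of the differentiability of $\Phi^{\ast}$.
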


\subsection{Detailed proofs}

\subsubsection{Proofs for gradient estimation error control}

\begin{proof}[Proof of Lemma \ref{lem:grad_error_highprob}]
The proof is composed of a few straightforward steps, with a flavour similar to \citet[Lems.~2--3]{nazin2019a}. First, we obtain bounds on the gradient errors and relevant moments. Second, we appeal to Bernstein's inequality for martingales, given for reference as Lemma \ref{lem:bernstein} in the appendix, to get a general-purpose high-probability bound. Finally, we just need to plug in a particular thresholding rule and clean up the resulting upper bounds. Before diving into the details, note that all the ancillary iterates used in Algorithm \ref{algo:anytime_robust} are such that $h_t \in \HH$. Convexity of $\HH$ thus implies that $\hbar_t \in \HH$ for all $t$. Since $\hstar \in \HH$ by assumption, all primal distance terms can thus be controlled using the finite diameter $\diameter$; this fact will be used repeatedly in this proof, without further mention. For readability, we will write $\epsilon_t \defeq \langle \overbar{G}_t - \nabla\risk(\hbar_t), h-h^{\prime} \rangle$ for any arbitrary choices of $h,h^{\prime} \in \HH$.

\paragraph{Step 1: basic bounds (uniform, mean, variance)}
To bound $|\epsilon_t|$ directly, from (\ref{eqn:grad_error_bd}) and the threshold setting (\ref{eqn:truncate_threshold_smooth}), it follows that for any $h,h^{\prime} \in \HH$ we have
\begin{align}
\nonumber
|\epsilon_t| & \leq \|\overbar{G}_t - \nabla\risk(\hbar_t)\|_{\ast}\|h-h^{\prime}\|\\
\nonumber
& \leq \|h-h^{\prime}\|\left[ 2\left(\widetilde{\varepsilon}\sigma + \smooth\|\widetilde{h}-\hbar_t\| \right) + \thres_0 \right]\\
\label{eqn:grad_error_highprob_1}
& \leq \diameter\left[ 2\left(\widetilde{\varepsilon}\sigma + \smooth\diameter \right) + \thres_0 \right].
\end{align}
For mean bounds, first observe that since  the sequence $(G_t)$ is unbiased in the sense of (\ref{eqn:grad_condition}), using the form (\ref{eqn:grad_est_handy_form}), we have
\begin{align*}
\exx_{[t-1]}\left[ \overbar{G}_t - \nabla\risk(\hbar_t) \right] = \exx_{[t-1]}(G_t - \widetilde{g})\idc_t,
\end{align*}
from which it follows that
\begin{align*}
\left\| \exx_{[t-1]}\left[ \overbar{G}_t - \nabla\risk(\hbar_t) \right] \right\|_{\ast} \leq \exx_{[t-1]}\|G_t - \nabla\risk(\hbar_t)\|_{\ast}\idc_t + \exx_{[t-1]}\|\nabla\risk(\hbar_t)-\widetilde{g}\|_{\ast}\idc_t.
\end{align*}
As a convenient bound on $\idc_t$, note that
\begin{align}
\nonumber
\exx_{[t-1]}\idc_t & \leq \exx_{[t-1]}\idc\{ \|G_t - \nabla\risk(\hbar_t)\|_{\ast} > \thres_0 \}\\
\label{eqn:indicator_bound}
& \leq \frac{\exx_{[t-1]}\|G_t - \nabla\risk(\hbar_t)\|_{\ast}^{2}}{\thres_0^{2}} \leq \frac{\sigma^{2}}{\thres_0^{2}}.
\end{align}
For completeness, a proof of (\ref{eqn:indicator_bound}) is given following the end of the current proof. With (\ref{eqn:indicator_bound}) in hand, using the inequalities of H\"{o}lder and Markov along with (\ref{eqn:grad_variance_bound}) and (\ref{eqn:grad_anchor_bound}), it follows that
\begin{align*}
\left\| \exx_{[t-1]}\left[ \overbar{G}_t - \nabla\risk(\hbar_t) \right] \right\|_{\ast} & \leq \sigma \sqrt{\exx_{[t-1]}\idc_t} + (\widetilde{\varepsilon}\sigma+\smooth\|\widetilde{h}-\hbar_t\|)\exx_{[t-1]}\idc_t\\
& \quad \leq \frac{\sigma^{2}}{\thres_0} + (\widetilde{\varepsilon}\sigma+\smooth\diameter)\left(\frac{\sigma}{\thres_0}\right)^{2}.
\end{align*}
Thus we have
\begin{align}
\nonumber
|\exx_{[t-1]}\epsilon_t| & \leq \left\|\exx_{[t-1]}\left[ \overbar{G}_t - \nabla\risk(\hbar_t) \right]\right\|_{\ast}\|h-h^{\prime}\|\\
\label{eqn:grad_error_highprob_2}
& \leq \diameter\left[ \frac{\sigma^{2}}{\thres_0} + (\widetilde{\varepsilon}\sigma+\smooth\diameter)\left(\frac{\sigma}{\thres_0}\right)^{2} \right].
\end{align}
Finally, for a rough conditional variance bound, first note that we have
\begin{align*}
\exx_{[t-1]}(\epsilon_t - \exx_{[t-1]}\epsilon_t)^{2} & \leq \exx_{[t-1]}|\epsilon_t|^{2}\\
& \leq \diameter^{2} \exx_{[t-1]}\|\overbar{G}_t - \nabla\risk(\hbar_t)\|_{\ast}^{2}.
\end{align*}
To get a more convenient bound, first note that the error vector can be written as a convex combination
\begin{align*}
\overbar{G}_t - \nabla\risk(\hbar_t) = (\widetilde{g}-\nabla\risk(\hbar_t))\idc_t + (G_t - \nabla\risk(\hbar_t))(1-\idc_t).
\end{align*}
Using the convexity of $\|\cdot\|_{\ast}^{2}$ along with (\ref{eqn:grad_variance_bound}), (\ref{eqn:grad_anchor_bound}), and (\ref{eqn:indicator_bound}), it follows that
\begin{align*}
\exx_{[t-1]}\|\overbar{G}_t - \nabla\risk(\hbar_t)\|_{\ast}^{2} & \leq \exx_{[t-1]}\|\widetilde{g}-\nabla\risk(\hbar_t)\|_{\ast}^{2}\idc_t + \exx_{[t-1]}\|G_t - \nabla\risk(\hbar_t)\|_{\ast}^{2}(1-\idc_t)\\
& \leq (\widetilde{\varepsilon} + \smooth\diameter)^{2}\left(\frac{\sigma}{\thres_0}\right)^{2} + \sigma^{2}.
\end{align*}
Taking this back to the rough variance bound given above, we obtain
\begin{align}
\nonumber
\exx_{[t-1]}(\epsilon_t - \exx_{[t-1]}\epsilon_t)^{2} & \leq \diameter^{2} \left[ (\widetilde{\varepsilon}\sigma + \smooth\diameter)^{2}\left(\frac{\sigma}{\thres_0}\right)^{2} + \sigma^{2} \right]\\
\label{eqn:grad_error_highprob_3}
& \leq \diameter^{2} \left[ (\widetilde{\varepsilon}\sigma + \smooth\diameter)\left(\frac{\sigma}{\thres_0}\right) + \sigma \right]^{2}.
\end{align}
This concludes the first step; the key take-aways are the bounds (\ref{eqn:grad_error_highprob_1}), (\ref{eqn:grad_error_highprob_2}), and (\ref{eqn:grad_error_highprob_3}).

\paragraph{Step 2: martingale concentration inequality}
Getting to the key quantities of interest, note that for any bound $|\exx_{[t-1]}\epsilon_t| \leq M$ that holds for all $t$ and any $\gamma > 0$, we have
\begin{align*}
\prr\left\{ \sum_{t=1}^{T}\alpha_t \epsilon_t > \alpha_{1:T}M + \gamma \right\} \leq \prr\left\{ \sum_{t=1}^{T}\alpha_t \left(\epsilon_t - \exx_{[t-1]}\epsilon_t\right) > \gamma \right\}.
\end{align*}
An obvious choice for $M$ is given by (\ref{eqn:grad_error_highprob_2}) in Step 1. This gives us a bounded martingale difference sequence to which we can apply Lemma \ref{lem:bernstein}. Using (\ref{eqn:grad_error_highprob_1}) and (\ref{eqn:grad_error_highprob_3}) from the previous step, the basic correspondence between the key elements of Lemma \ref{lem:bernstein} and our case is as follows:
\begin{align*}
V_t & \leftrightarrow \alpha_t (\epsilon_t - \exx_{[t-1]}\epsilon_t)\\
\Sigma_{T}^{2} & \leftrightarrow \sum_{t=1}^{T}\alpha_t^{2}\exx_{[t-1]}\left( \epsilon_t - \exx_{[t-1]}\epsilon_t  \right)^{2}\\
B & \leftrightarrow 2\diameter\left[ 2(\widetilde{\varepsilon}\sigma+\smooth\diameter) + \thres_0 \right] \left( \max_{t \in [T]} \alpha_t \right)\\
\gamma_1 & \leftrightarrow \log(\delta^{-1})\\
\gamma_2 & \leftrightarrow \diameter^{2} \left[ (\widetilde{\varepsilon}\sigma + \smooth\diameter)\left(\frac{\sigma}{\thres_0}\right) + \sigma \right]^{2}\left(\sum_{t=1}^{T}\alpha_t^{2}\right).
\end{align*}
Note the setting for $B$ just uses the simple bound $|\alpha_t\epsilon_t-\exx_{[t-1]}\alpha_t\epsilon_t| \leq 2|\alpha_t\epsilon_t|$ combined with (\ref{eqn:grad_error_highprob_1}). Using these correspondences to keep the notation clean, an application of Lemma \ref{lem:bernstein} and a union bound with the event of (\ref{eqn:truncate_anchors}) immediately implies that we have
\begin{align}\label{eqn:grad_error_highprob_4}
\sum_{t=1}^{T} \alpha_t \epsilon_t \leq \alpha_{1:T}M + \sqrt{2\gamma_2\log(\delta^{-1})} + \frac{\sqrt{2}\log(\delta^{-1})}{3}B
\end{align}
with probability no less than $1-2\delta$. General-purpose inequality (\ref{eqn:grad_error_highprob_4}) is the main take-away of this step.

\paragraph{Step 3: detailed calculations}
All that remains is to plug in the specific setting of the threshold parameter $\thres_0$ given in the lemma statement. Clearly, this can only take on two possible values, depending on whether inequality
\begin{align}\label{eqn:grad_error_highprob_5}
\smooth\diameter \leq \sigma \sqrt{T/\log(\delta^{-1})}
\end{align}
holds or not. Thus, there are just two cases to check. We take these one at a time and summarize the key intermediate steps.

First, assume (\ref{eqn:grad_error_highprob_5}) holds. Then $\thres_0 = \sigma \sqrt{T/\log(\delta^{-1})} + \widetilde{\varepsilon}\sigma$. Plugging this in, some elementary algebra cleans up the bounds such that the key constants in Step 2 can be taken as follows:
\begin{align*}
M & \leq 2\diameter\sigma\sqrt{\frac{\log(\delta^{-1})}{T}},\\
B & \leq 6\diameter\sigma\left( \sqrt{\frac{T}{\log(\delta^{-1})}} + \widetilde{\varepsilon}\sigma \right)\left(\max_{t \in [T]}\alpha_t\right),\\
\gamma_2 & \leq 4(\diameter\sigma)^{2}\left(\sum_{t=1}^{T}\alpha_t^{2}\right).
\end{align*}
Plugging these into (\ref{eqn:grad_error_highprob_4}), if we restrict $\delta$ such that $\log(\delta^{-1}) \leq T / (\widetilde{\varepsilon}\sigma)^{2}$ to obtain simpler bounds, tidying up via some elementary algebra leads to a bound of the form
\begin{align}
\label{eqn:grad_error_highprob_6}
\sum_{t=1}^{T} \alpha_t \epsilon_t \leq 2\diameter\sigma\sqrt{2\log(\delta^{-1})}\left[ \frac{\alpha_{1:T}}{\sqrt{T}} + \sqrt{\sum_{t=1}\alpha_t^{2}} + 2\left(\max_{t \in [T]}\alpha_t\right) \right]
\end{align}
for probability no less than $1-2\delta$.

Next, assume that (\ref{eqn:grad_error_highprob_5}) fails to hold, implying $\thres_0 = \smooth\diameter + \widetilde{\varepsilon}\sigma$. This also trivially implies $\sigma^{2} \leq (\smooth\diameter)^{2}\log(\delta^{-1})/T$. Using these facts, in an analogous fashion to the previous case, we have
\begin{align*}
M & \leq \frac{2\smooth\diameter^{2}\log(\delta^{-1})}{T},\\
B & \leq 6\diameter(\smooth\diameter+\widetilde{\varepsilon}\sigma)\left(\max_{t \in [T]}\alpha_t\right),\\
\gamma_2 & \leq 4\smooth^{2}\diameter^{4}\frac{\log(\delta^{-1})}{T}\left(\sum_{t=1}^{T}\alpha_t^{2}\right).
\end{align*}
Again substituting these bounds into (\ref{eqn:grad_error_highprob_4}), restricting $\delta$ such that $\log(\delta^{-1}) \leq T/\widetilde{\varepsilon}^{2}$ for simplicity and doing some cleanup, we have with probability at least $1-2\delta$ that
\begin{align}
\label{eqn:grad_error_highprob_7}
\sum_{t=1}^{T} \alpha_t \epsilon_t \leq 2\smooth\diameter^{2}\log(\delta^{-1}) \left[ \frac{\alpha_{1:T}}{T} + \sqrt{\frac{1}{T}\sum_{t=1}\alpha_t^{2}} + 2\sqrt{2}\left(\max_{t \in [T]}\alpha_t\right) \right].
\end{align}
The desired result follows immediately from the inequalities (\ref{eqn:grad_error_highprob_6}) and (\ref{eqn:grad_error_highprob_7}).
\end{proof}

\begin{proof}[Proof of inequality (\ref{eqn:indicator_bound})]
If $\idc_t = 0$, then trivially $\idc_t \leq \idc\{ \|G_t - \nabla\risk(\hbar_t)\|_{\ast} > \thres_0 \}$, so we consider the case of $\idc_t = 1$. With the threshold $\thres_t$ set as (\ref{eqn:truncate_threshold_smooth}) and using (\ref{eqn:grad_anchor_bound}), we have
\begin{align*}
\thres_0 & < \|G_t - \widetilde{g}\|_{\ast} - (\widetilde{\varepsilon}\sigma + \smooth\|\widetilde{h}-\hbar_t\|)\\
& \leq \|G_t - \widetilde{g}\|_{\ast} - \|\widetilde{g} - \nabla\risk(\hbar_t)\|_{\ast}\\
& \leq \|G_t - \nabla\risk(\hbar_t)\|_{\ast}.
\end{align*}
From this inequality, an application of Markov's inequality, and the variance bound (\ref{eqn:grad_variance_bound}), we have that (\ref{eqn:indicator_bound}) is valid.
\end{proof}

\subsubsection{Proofs of results in the main text}

\begin{proof}[Proof of Lemma \ref{lem:anytime_basic}]
A useful observation made by \citet{cutkosky2019a} is that using only the definition of $\hbar_t$ in (\ref{eqn:hbar_defn}), we have that for all integer $t > 0$ (and using the convention $\alpha_{1:0} \defeq 0$), the following relation holds:
\begin{align}\label{eqn:alpha_relation}
\alpha_t \left( \hbar_t - h_t \right) = \alpha_{1:(t-1)}\left(\hbar_{t-1} - \hbar_t\right).
\end{align}
This relation will be utilized shortly. To begin, direct manipulations show that we can write
\begin{align*}
\risk(\hbar_T) = \frac{1}{\alpha_{1:T}}\left[ \sum_{t=1}^{T} \alpha_t \risk(\hbar_t) + \sum_{t=2}^{T} \alpha_{1:(t-1)}\left[ \risk(\hbar_t) - \risk(\hbar_{t-1}) \right] \right].
\end{align*}
Subtracting $\risk(\hstar)$ from both sides, we have
\begin{align}\label{eqn:anytime_basic_1}
\risk(\hbar_T) - \risk(\hstar) = \frac{1}{\alpha_{1:T}}\left[ \sum_{t=1}^{T} \alpha_t \left[ \risk(\hbar_t) - \risk(\hstar) \right] + \sum_{t=2}^{T} \alpha_{1:(t-1)}\left[ \risk(\hbar_t) - \risk(\hbar_{t-1}) \right] \right].
\end{align}
For the first sum in (\ref{eqn:anytime_basic_1}), note that by definition of $\bregbase_{\risk}$, we have
\begin{align}\label{eqn:anytime_basic_2}
\sum_{t=1}^{T} \alpha_t \left[ \risk(\hbar_t) - \risk(\hstar) \right] = \sum_{t=1}^{T} \alpha_t \left[ \langle \partial\risk(\hbar_t), \hbar_t - \hstar \rangle - \bregbase_{\risk}(\hstar;\hbar_t) \right].
\end{align}
Similarly for the second sum in (\ref{eqn:anytime_basic_1}), we have
\begin{align}
\nonumber
\sum_{t=2}^{T} \alpha_{1:(t-1)}\left[ \risk(\hbar_t) - \risk(\hbar_{t-1}) \right] & = \sum_{t=2}^{T} \alpha_{1:(t-1)}\left[ \langle \partial\risk(\hbar_t), \hbar_t - \hbar_{t-1} \rangle - \bregbase_{\risk}(\hbar_{t-1};\hbar_t) \right]\\
\label{eqn:anytime_basic_3}
& = \sum_{t=2}^{T} \left[ \alpha_{t} \langle \partial\risk(\hbar_t), h_t - \hbar_t \rangle - \alpha_{1:(t-1)}\bregbase_{\risk}(\hbar_{t-1};\hbar_t) \right],
\end{align}
noting that the second equality is obtained by applying (\ref{eqn:alpha_relation}). The desired result follows from applying (\ref{eqn:anytime_basic_2})--(\ref{eqn:anytime_basic_3}) to (\ref{eqn:anytime_basic_1}), canceling terms and noting that $\hbar_1 = h_1$ by definition (\ref{eqn:hbar_defn}).
\end{proof}

\begin{proof}[Proof of Corollary \ref{cor:anytime_highprob}]
First noting simply that 
\begin{align*}
\sum_{t=1}^{T} \alpha_t \langle \nabla\risk(\hbar_t), h_t-\hstar \rangle = \sum_{t=1}^{T} \alpha_t \left[ \langle \overbar{G}_t, h_t-\hstar \rangle + \langle \nabla\risk(\hbar_t) - \overbar{G}_t, h_t-\hstar \rangle \right],
\end{align*}
the corollary follows from an application of Lemmas \ref{lem:anytime_basic} and \ref{lem:grad_error_highprob}, along with regret definition (\ref{eqn:regret_defn}), noting that we have set
\begin{align*}
B_T \defeq & \sum_{t=1}^{T}\alpha_t \bregbase_{\risk}(\hstar;\hbar_t) + \sum_{t=1}^{T-1}\alpha_{1:t}\bregbase_{\risk}(\hbar_t;\hbar_{t+1}),
\end{align*}
and $B_T \geq 0$ follows from the assumed convexity of $\risk$.
\end{proof}

\begin{proof}[Proof of Lemma \ref{lem:rftrl_regret}]
To emphasize how this result can be potentially generalized to other losses, write $\loss_t(h) \defeq \alpha_t \langle \overbar{G}_t, h \rangle$. The objective function used at each step is $f_{t}(h) \defeq \psi_{t}(h) + \sum_{i=1}^{t-1}\alpha_i \loss_i(h)$ for all $t > 1$, with $f_{1} \defeq \psi_{1}$. Thus, we have $h_{t} \in \argmin_{h \in \HH} f_{t}(h)$ for all $t \geq 1$.

First, note that for any $t \geq 1$, by trivial modifications, we have
\begin{align*}
-\sum_{i=1}^{t}\loss_i(\hstar) & = \psi_{t+1}(\hstar) - f_{t+1}(\hstar)\\
& = \psi_{t+1}(\hstar) - f_1(h_1) + f_1(h_1) - f_{t+1}(h_{t+1}) + f_{t+1}(h_{t+1}) - f_{t+1}(\hstar)\\
& = \psi_{t+1}(\hstar) - f_1(h_1) + \sum_{i=1}^{t}\left[ f_i(h_i) - f_{i+1}(h_{i+1}) \right] + f_{t+1}(h_{t+1}) - f_{t+1}(\hstar).
\end{align*}
By adding $\sum_{i=1}^{t}\loss_i(h)$ to both sides, we obtain a new regret expression:
\begin{align*}
\sum_{i=1}^{t}&\left[ \loss_i(h_i) - \loss_i(\hstar) \right]\\
& = \psi_{t+1}(\hstar) - f_1(h_1) + \sum_{i=1}^{t}\left[ f_i(h_i) - f_{i+1}(h_{i+1}) + \loss_i(h_i) \right] + f_{t+1}(h_{t+1}) - f_{t+1}(\hstar).
\end{align*}
The preceding equality is well-known \citep[Lem.~7.1]{orabona2020a}. Under the assumption that $\hstar \in \HH$, the optimality of $h_{t+1}$ with respect to $f_{t+1}$ implies $f_{t+1}(h_{t+1}) \leq f_{t+1}(\hstar)$, and thus we can immediately obtain a simpler inequality
\begin{align}
\label{eqn:rftrl_regret_basic}
\sum_{i=1}^{t}\left[ \loss_i(h_i) - \loss_i(\hstar) \right] \leq \psi_{t+1}(\hstar) - f_1(h_1) + \sum_{i=1}^{t}\left[ f_i(h_i) - f_{i+1}(h_{i+1}) + \loss_i(h_i) \right].
\end{align}
Moving forward, we will modify the bound in (\ref{eqn:rftrl_regret_basic}) in a way that is conducive to using our Lemma \ref{lem:grad_error_highprob}. Writing $\loss_{t}^{\ast}(h) \defeq \alpha_t \langle \partial\risk(\hbar_t), h \rangle$, note that we can manipulate the key summands as follows:
\begin{align}
\nonumber
f_i(h_i) & - f_{i+1}(h_{i+1}) + \loss_i(h_i)\\
\nonumber
& = (f_i(h_i) + \loss_{i}(h_i)) - (f_i(h_{i+1}) + \loss_{i}(h_{i+1})) + \psi_i(h_{i+1}) - \psi_{i+1}(h_{i+1})\\
\nonumber
& = (f_i(h_i) + \loss_{i}^{\ast}(h_i)) - (f_i(h_{i+1}) + \loss_{i}^{\ast}(h_{i+1})) + \psi_i(h_{i+1}) - \psi_{i+1}(h_{i+1})\\
\label{eqn:rftrl_intervention}
& \qquad + \left( \loss_{i}(h_i) - \loss_{i}^{\ast}(h_i) \right) - \left( \loss_{i}(h_{i+1}) - \loss_{i}^{\ast}(h_{i+1}) \right).
\end{align}
The first equality follows immediately from the definition of $f_{i+1}$, and the second equality just makes trivial modifications and re-arranges terms. Using our definitions of $\loss_i$ and $\loss_{i}^{\ast}$ here, the final two terms of (\ref{eqn:rftrl_intervention}) are simply
\begin{align}
\nonumber
\left( \loss_{i}(h_i) - \loss_{i}^{\ast}(h_i) \right) - \left( \loss_{i}(h_{i+1}) - \loss_{i}^{\ast}(h_{i+1}) \right) & = \alpha_i \langle \overbar{G}_i - \partial\risk(\hbar_i), h_i - h_{i+1} \rangle\\
\label{eqn:rftrl_intervention_cleanup_1}
& = \alpha_i \langle \partial\risk(\hbar_i) - \overbar{G}_i, h_{i+1} - h_i \rangle.
\end{align}
For the remaining terms in (\ref{eqn:rftrl_intervention}),  if we write $h_{i}^{\ast} \in \argmin_{h \in \VV} \left[ f_i(h) + \loss_{i}^{\ast}(h)\right]$, then
\begin{align}
\nonumber
(f_i(h_i) + \loss_{i}^{\ast}(h_i)) - (f_i(h_{i+1}) + \loss_{i}^{\ast}(h_{i+1})) & \leq (f_i(h_i) + \loss_{i}^{\ast}(h_i)) - (f_i(h_{i}^{\ast}) + \loss_{i}^{\ast}(h_{i}^{\ast}))\\
\label{eqn:rftrl_intervention_cleanup_2}
& \leq \frac{1}{2\strong_i}\|\partial\risk(\hbar_i)\|_{\ast}^{2}
\end{align}
where the second inequality follows from helper Lemma \ref{lem:sc_smooth_duality}, combined with our assumption that each $f_i$ is $\strong_i$-strongly convex.\footnote{Noting that $h_{i}^{\ast}$ is defined by minimizing a sub-differentiable function over the whole space $\VV$, and thus we have $0 \in \partial(f_i + \loss_{i}^{\ast})(h_{i}^{\ast})$, letting us simplify the upper bound given by Lemma \ref{lem:sc_smooth_duality}.} Applying (\ref{eqn:rftrl_intervention})--(\ref{eqn:rftrl_intervention_cleanup_2}) to the basic regret inequality (\ref{eqn:rftrl_regret_basic}), for any $t \geq 1$ we obtain the inequality
\begin{align*}
\sum_{i=1}^{t}\left[ \loss_i(h_i) - \loss_i(\hstar) \right] & \leq \psi_{t+1}(\hstar) - f_1(h_1) + \sum_{i=1}^{t}\left[ \psi_i(h_{i+1}) - \psi_{i+1}(h_{i+1}) \right]\\
& \qquad + \sum_{i=1}^{t}\left[ \frac{\|\partial\risk(\hbar_i)\|_{\ast}^{2}}{2\strong_i} + \alpha_i \langle \partial\risk(\hbar_i) - \overbar{G}_i, h_{i+1} - h_i \rangle \right].
\end{align*}
Finally, using the definition $f_1 \defeq \psi_1$ and setting $\psi_{T+1} \defeq \psi_{T}$ completes the proof.
\end{proof}

\begin{proof}[Proof of Lemma \ref{lem:rsmd_regret}]
To begin, note that using linearity and trivial modifications, we have
\begin{align*}
\beta_t & \langle \overbar{G}_t, h_t - \hstar \rangle\\
& = \beta_t \langle \overbar{G}_t, h_t - h_{t+1} \rangle + \langle \beta_t \overbar{G}_t \pm (\nabla\Phi(h_{t+1})-\nabla\Phi(h_t)), h_{t+1} - \hstar \rangle\\
& = \beta_t \langle \overbar{G}_t, h_t - h_{t+1} \rangle\\
& \qquad + \langle \nabla\Phi(h_{t+1})-\nabla\Phi(h_t), \hstar - h_{t+1} \rangle + \langle \nabla\Phi(h_t)-\nabla\Phi(h_{t+1}) - \beta_t \overbar{G}_t, \hstar - h_{t+1} \rangle.
\end{align*}
To clean this up, the last term can be bounded as
\begin{align*}
\langle \nabla\Phi(h_t)-\nabla\Phi(h_{t+1}) - \beta_t \overbar{G}_t, \hstar - h_{t+1} \rangle \leq 0,
\end{align*}
a fact which follows from applying Lemmas \ref{lem:helper_md_general}--\ref{lem:helper_rsmd} to (\ref{eqn:rsmd_update_proximal}), namely our modified SMD update.\footnote{Note that Lemma \ref{lem:helper_md_general} is very general and assumes the existence of solutions to minimization problems. This gap is filled in by Lemma \ref{lem:helper_rsmd}, for the special case of (\ref{eqn:rsmd_update_proximal}), proving that the required minimizers are well-defined.} Furthermore, if we re-write the second term on the right-hand side using the handy identity (\ref{eqn:bregman_threepoint}), we obtain the following inequality:
\begin{align}\label{eqn:rsmd_regret_1}
\beta_t \langle \overbar{G}_t, h_t - \hstar \rangle \leq \beta_t \langle \overbar{G}_t, h_t - h_{t+1} \rangle + \breg(\hstar;h_t) - \breg(\hstar;h_{t+1}) - \breg(h_{t+1};h_t).
\end{align}
To obtain a bound that lets us utilize the control offered by Lemma \ref{lem:grad_error_highprob}, here it is natural to control the first term on the right-hand side of (\ref{eqn:rsmd_regret_1}) by
\begin{align}
\nonumber
\beta_t \langle \overbar{G}_t, h_t - h_{t+1} \rangle & = \beta_t \left[ \langle \overbar{G}_t - \partial\risk(\hbar_t), h_t - h_{t+1} \rangle + \langle \partial\risk(\hbar_t), h_t - h_{t+1} \rangle \right]\\
\label{eqn:rsmd_regret_2}
& \leq \beta_t \langle \overbar{G}_t - \partial\risk(\hbar_t), h_t - h_{t+1} \rangle + \frac{\beta_{t}^{2}}{2z}\|\partial\risk(\hbar_t)\|_{\ast}^{2} + \frac{z}{2}\|h_t - h_{t+1}\|^{2},
\end{align}
where $z > 0$ is an arbitrary constant. To see that this inequality is valid, first note that the definition of the norm $\|\cdot\|_{\ast}$ implies $\langle \partial\risk(\hbar_t), h \rangle \leq \|\partial\risk(\hbar_t)\|_{\ast}\|h\|$ for any $h \in \VV$, and then simply apply the elementary inequality $2xy \leq zx^{2} + y^{2}/z$, valid for any $x,y \in \RR$ and $z > 0$.

Before combining inequalities (\ref{eqn:rsmd_regret_1}) and (\ref{eqn:rsmd_regret_2}), note that by helper inequality (\ref{eqn:bregman_sc}), we have that
\begin{align*}
\frac{z}{2}\|h_t - h_{t+1}\|^{2} - \breg(h_{t+1};h_t) \leq \frac{z-\strong}{2}\|h_t - h_{t+1}\|^{2},
\end{align*}
thus making $z = \strong$ an obvious choice for the free parameter $z$ in (\ref{eqn:rsmd_regret_2}). With this inequality in mind, combining (\ref{eqn:rsmd_regret_1}) and (\ref{eqn:rsmd_regret_2}), we obtain
\begin{align*}
\beta_t \langle \overbar{G}_t, h_t - \hstar \rangle \leq \beta_t \langle \overbar{G}_t - \partial\risk(\hbar_t), h_t - h_{t+1} \rangle + \frac{\beta_{t}^{2}}{2\strong}\|\partial\risk(\hbar_t)\|_{\ast}^{2} + \breg(\hstar;h_t) - \breg(\hstar;h_{t+1}).
\end{align*}
Dividing both sides by $\beta_t > 0$ and using linearity to re-arrange the gradient error terms yields the desired result.
\end{proof}

\begin{proof}[Proof of Theorem \ref{thm:anytime_rsmd}]
To start, we would like to use Lemma \ref{lem:rsmd_regret} to control $\regret(T;\algo)$ here. To do this, first note that whenever we have $(\beta_t/\beta_{t-1}) \leq (\alpha_t/\alpha_{t-1})$, the fact that $0 \leq \breg \leq \diameter_{\Phi}$ directly implies the following bound:
\begin{align}
\nonumber
\sum_{t=1}^{T} & \frac{\alpha_t}{\beta_t}\left[ \breg(\hstar;h_t)-\breg(\hstar;h_{t+1}) \right]\\
\nonumber
 & = \frac{\alpha_1}{\beta_1}\breg(\hstar;h_1)-\frac{\alpha_T}{\beta_T}\breg(\hstar;h_{T+1}) + \sum_{t=2}^{T} \breg(\hstar;h_t)\left(\frac{\alpha_t}{\beta_t}-\frac{\alpha_{t-1}}{\beta_{t-1}}\right)\\
\nonumber
& \leq \frac{\alpha_1}{\beta_1}\breg(\hstar;h_1) + \diameter_{\Phi} \sum_{t=2}^{T}\left(\frac{\alpha_t}{\beta_t}-\frac{\alpha_{t-1}}{\beta_{t-1}}\right)\\
\nonumber
& = \frac{\alpha_1}{\beta_1}\breg(\hstar;h_1) + \diameter_{\Phi} \left(\frac{\alpha_T}{\beta_T}-\frac{\alpha_1}{\beta_1}\right)\\
\label{eqn:anytime_rsmd_0}
& \leq \frac{\alpha_T}{\beta_T}\diameter_{\Phi}.
\end{align}
Thus, applying (\ref{eqn:anytime_rsmd_0}) to Lemma \ref{lem:rsmd_regret}, we have
\begin{align}\label{eqn:anytime_rsmd_1}
\regret(T;\algo) \leq \frac{\alpha_T}{\beta_T}\diameter_{\Phi} + \sum_{t=1}^{T} \left[ \frac{\alpha_t \beta_t}{2\strong} \|\nabla\risk(\hbar_t)\|_{\ast}^{2} + \alpha_t \langle \nabla\risk(\hbar_t) - \overbar{G}_t, h_{t+1} - h_t \rangle \right].
\end{align}
Using the stationarity of $\hstar \in \HH$ and the assumption that $\beta_t \leq \strong/\smooth$, we have
\begin{align}
\nonumber
\frac{\alpha_t \beta_t}{2\strong}\|\nabla\risk(\hbar_t)\|_{\ast}^{2} = \frac{\alpha_t \beta_t}{2\strong}\|\nabla\risk(\hbar_t) - \nabla\risk(\hstar)\|_{\ast}^{2} & \leq \frac{\alpha_t}{2\smooth}\|\nabla\risk(\hbar_t) - \nabla\risk(\hstar)\|_{\ast}^{2}\\
\label{eqn:anytime_rsmd_2}
& \leq \alpha_t \bregbase_{\risk}(\hstar;\hbar_t)
\end{align}
noting that the final inequality uses a well-known characterization of $\smooth$-smoothness \citep[Thm.~2.1.5]{nesterov2018ConvOptNew}.\footnote{While the cited result of \citet{nesterov2018ConvOptNew} is for Euclidean space, the argument readily generalizes to normed linear spaces.} Applying (\ref{eqn:anytime_rsmd_2}) to (\ref{eqn:anytime_rsmd_1}) to get a new regret bound, and subsequently applying that new regret bound to the key starting expression (\ref{eqn:regret_starting_point}), since the both the $\langle \overbar{G}_t - \nabla\risk(\hbar_t), h_t \rangle$ and $\alpha_t \bregbase_{\risk}(\hstar;\hbar_t)$ terms cancel out, and convexity of $\risk$ implies $\bregbase_{\risk} \geq 0$, we obtain
\begin{align*}
\risk(\hbar_T) - \risk(\hstar) \leq \frac{1}{\alpha_{1:T}} \left[ \frac{\alpha_T}{\beta_T}\diameter_{\Phi} + \sum_{t=1}^{T} \alpha_t \langle \overbar{G}_t - \nabla\risk(\hbar_t), \hstar - h_{t+1} \rangle \right].
\end{align*}
Since $h_{t+1}, \hstar \in \HH$, applying Lemma \ref{lem:grad_error_highprob} yields the desired result.
\end{proof}

\begin{proof}[Proof of Corollary \ref{cor:anytime_sgd}]
Direct calculations shows that when $\alpha_t = 1$ for all $t$, we have
\begin{align*}
q_{\delta}(T) & = 4\diameter\sigma\sqrt{2\log(\delta^{-1})}(\sqrt{T}+1) \leq 8\diameter\sigma\sqrt{2T\log(\delta^{-1})}\\
r_{\delta}(T) & = 4\smooth\diameter^{2}\log(\delta^{-1})\left(1 + \sqrt{2}\right) \leq 12\smooth\diameter^{2}\log(\delta^{-1}).
\end{align*}
Furthermore, the SGD update (\ref{eqn:sgd_update}) amounts to the setup of Theorem specialized to $\strong = 1$ and $\Phi(u) = \|u\|_{2}^{2}/2$, and it thus follows immediately that $\diameter_{\Phi} = 2\diameter^{2}$, and thus the upper bound can be written simply as
\begin{align*}
\risk(\hbar_T) - \risk(\hstar) \leq \frac{2\diameter^{2}}{T\beta_T} + \frac{\max\{q_{\delta}(T),r_{\delta}(T)\}}{T}.
\end{align*}
Plugging in the bounds given above for $q_{\delta}(T)$ and $r_{\delta}(T)$ immediately yields the desired result.
\end{proof}

\begin{proof}[Proof of Theorem \ref{thm:raoftrl_excess_risk}]
First, note that the existence of the sequence $(h_t)$ holds from the same argument as made in the proof of Lemma \ref{lem:helper_rsmd}. We break the argument into two straightforward steps.

\paragraph{Step 1: high-probability regret control for AO-FTRL}

As a foundational fact upon which we initiate our analysis, from \citet[Thm.~17]{joulani2020a}, for $\algo$ implemented by (\ref{eqn:raoftrl_update}), the regret defined by (\ref{eqn:regret_defn}) can be bounded above as
\begin{align}
\label{eqn:raoftrl_0}
\regret(T;\algo) \leq \sum_{t=1}^{T}\left[ \varphi_{t-1}(\hstar) - \varphi_{t-1}(h_t) - \bregbase_{\psi_t}(h_{t+1};h_t)  + \alpha_t \langle \overbar{G}_t - \widetilde{G}_t, h_t - h_{t+1} \rangle \right].
\end{align}
To control the gradient error term, for each $t \geq 1$, we break down the differences as
\begin{align}
\label{eqn:raoftrl_breakdown}
\overbar{G}_t - \widetilde{G}_t = \left[ \overbar{G}_t - \nabla\risk(\hbar_t) \right] + \left[ \nabla\risk(\hbar_t) - \nabla\risk(\hbar_{t-1}) \right] + \left[ \nabla\risk(\hbar_{t-1}) - \widetilde{G}_t \right].
\end{align}
We will need to consider each of these three terms plugged into $\alpha_t \langle \cdot , h_t - h_{t+1} \rangle$, one at a time. Since the first term on the right-hand side of (\ref{eqn:raoftrl_breakdown}) also appears in (\ref{eqn:regret_starting_point}), we leave it as-is for now, to be canceled out later. We will next consider both of the remaining terms for the case of $t>1$, before coming back to the $t=1$ terms. Under $t>1$, for the second term in (\ref{eqn:raoftrl_breakdown}), noting that $\psi_t$ is $\strong_t$-strongly convex by assumption, we have
\begin{align*}
& \alpha_t \langle \nabla\risk(\hbar_t) - \nabla\risk(\hbar_{t-1}), h_t - h_{t+1} \rangle - \bregbase_{\psi_t}(h_{t+1};h_t)\\
& \qquad \leq \alpha_t \langle \nabla\risk(\hbar_t) - \nabla\risk(\hbar_{t-1}), h_t - h_{t+1} \rangle - \frac{\strong_t}{2}\|h_t - h_{t+1}\|^{2}\\
& \qquad \leq \frac{\alpha_{t}^{2}}{2\strong_t}\| \nabla\risk(\hbar_t) - \nabla\risk(\hbar_{t-1}) \|_{\ast}^{2},
\end{align*}
noting that the final inequality follows by the Fenchel-Young inequality, in particular the fact that $\langle g^{\ast}, h \rangle - (a/2)\|h\|^{2} \leq (1/(2a))\|g^{\ast}\|_{\ast}^{2}$, for any $h \in \VV$, $g^{\ast} \in \VV^{\ast}$, $a > 0$. Since we have assumed that under a $\smooth$-smooth $\risk$, weights are set such that $(\smooth/\strong_t)\alpha_{t}^{2} \leq \alpha_{1:(t-1)}$, we have
\begin{align*}
\frac{\alpha_{t}^{2}}{2\strong_t}\| \nabla\risk(\hbar_t) - \nabla\risk(\hbar_{t-1}) \|_{\ast}^{2} \leq \frac{\alpha_{1:(t-1)}}{2\smooth}\| \nabla\risk(\hbar_t) - \nabla\risk(\hbar_{t-1}) \|_{\ast}^{2} \leq \alpha_{1:(t-1)} \bregbase_{\risk}(\hbar_{t-1};\hbar_t)
\end{align*}
for each $t > 1$, where the last step uses a basic characterization of $\smooth$-smoothness \citep[Thm.~2.1.5]{nesterov2018ConvOptNew}. As such, for each $t > 1$ we have
\begin{align}\label{eqn:raoftrl_3}
\alpha_t \langle \nabla\risk(\hbar_t) - \nabla\risk(\hbar_{t-1}), h_t - h_{t+1} \rangle - \bregbase_{\psi_t}(h_{t+1};h_t) \leq \alpha_{1:(t-1)} \bregbase_{\risk}(\hbar_{t-1};\hbar_t).
\end{align}
For the third term in (\ref{eqn:raoftrl_breakdown}), Since we are setting $\widetilde{G}_t = \overbar{G}_{t-1}$ for each $t>1$, we can bound this as
\begin{align}
\nonumber
\sum_{t=2}^{T} \alpha_t \langle \nabla\risk(\hbar_{t-1}) - \widetilde{G}_t, h_t - h_{t+1} \rangle & = \sum_{t=2}^{T} \alpha_t \langle \nabla\risk(\hbar_{t-1}) - \overbar{G}_{t-1}, h_t - h_{t+1} \rangle\\
\label{eqn:raoftrl_4a}
& \leq \sum_{t=2}^{T} \alpha_t \sup_{h,h^{\prime} \in \HH} \langle \nabla\risk(\hbar_{t-1}) - \overbar{G}_{t-1}, h - h^{\prime} \rangle\\
\label{eqn:raoftrl_4b}
& \leq \max\left\{ q_{\delta}(T), r_{\delta}(T) \right\},
\end{align}
where $q_{\delta}(\cdot)$ and $r_{\delta}(\cdot)$ are as defined in Lemma \ref{lem:grad_error_highprob}, and this bound holds with probability no less than $1-2\delta$. While (\ref{eqn:raoftrl_4b}) does not follow immediately from Lemma \ref{lem:grad_error_highprob} as-is, it follows from a straightforward modified argument, which we outline immediately after the conclusion of this proof. Finally, to cover the $t=1$ terms, just as before we have
\begin{align}
\nonumber
\alpha_1 & \left[ \langle \nabla\risk(\hbar_1)-\nabla\risk(\hbar_0), h_1 - h_2 \rangle + \langle \nabla\risk(\hbar_0) - \widetilde{G}_1, h_1 - h_2 \rangle \right] - \bregbase_{\psi_1}(h_2;h_1)\\
\nonumber
& = \alpha_1 \left[ \langle \nabla\risk(\hbar_1)-\widetilde{G}_1, h_1 - h_2 \rangle \right] - \bregbase_{\psi_1}(h_2;h_1)\\
\label{eqn:raoftrl_4c}
& \leq \frac{\alpha_{1}^{2}}{2\strong_1}\|\nabla\risk(\hbar_1)-\widetilde{G}_1\|_{\ast}^{2}.
\end{align}
This term can now be left as-is. The main takeaways of this step are the bounds (\ref{eqn:raoftrl_3}), (\ref{eqn:raoftrl_4b}), and (\ref{eqn:raoftrl_4c}).

\paragraph{Step 2: cleanup}

In consideration of the regret bound (\ref{eqn:raoftrl_0}), the breakdown (\ref{eqn:raoftrl_breakdown}), and the subsequent upper bounds (\ref{eqn:raoftrl_3}), (\ref{eqn:raoftrl_4b}), and (\ref{eqn:raoftrl_4c}), we obtain a new bound of the form
\begin{align}
\nonumber
\alpha_{1:T} & \left[ \risk(\hbar_T) - \risk(\hstar) \right]\\
\nonumber
& \leq \frac{\alpha_{1}^{2}}{2\strong_1}\|\nabla\risk(\hbar_1)-\widetilde{G}_1\|_{\ast}^{2} + \max\left\{ q_{\delta}(T), r_{\delta}(T) \right\}\\
\label{eqn:raoftrl_5}
& \qquad + \sum_{t=1}^{T} \left[ \varphi_{t-1}(\hstar) - \varphi_{t-1}(h_t) + \alpha_t \langle \overbar{G}_t - \nabla\risk(\hbar_t), \hstar - h_{t+1} \rangle \right].
\end{align}
To see that this is valid, first be careful to note that the last sum is in terms of $\hstar - h_{t+1}$ and not $\hstar-h_t$; this is due to cancellation of $\langle \cdot , h_t - h_{t+1} \rangle$ terms in the regret bound (via (\ref{eqn:raoftrl_0}) and (\ref{eqn:raoftrl_breakdown}). Also note that the sum of $T-1$ Bregman divergence terms cancels out, recalling the form of (\ref{eqn:regret_starting_point}) and the fact that
\begin{align*}
\sum_{t=2}^{T} \alpha_{1:(t-1)} \bregbase_{\risk}(\hbar_{t-1};\hbar_t) = \sum_{t=1}^{T-1} \alpha_{1:t} \bregbase_{\risk}(\hbar_t;\hbar_{t+1}).
\end{align*}
As for the remaining $T$ Bregman terms being subtracted, just bound them as $-\bregbase_{\risk}(\hstar;\hbar_t) \leq 0$ for each $t \geq 1$. Since $h_{t+1}, \hstar \in \HH$, we can directly apply Lemma \ref{lem:grad_error_highprob} to (\ref{eqn:raoftrl_5}) to get
\begin{align*}
\alpha_{1:T} & \left[ \risk(\hbar_T) - \risk(\hstar) \right]\\
& \leq \frac{\alpha_{1}^{2}}{2\strong_1}\|\nabla\risk(\hbar_1)-\widetilde{G}_1\|_{\ast}^{2} + 2\max\left\{ q_{\delta}(T), r_{\delta}(T) \right\} + \sum_{t=1}^{T} \left[ \varphi_{t-1}(\hstar) - \varphi_{t-1}(h_t) \right]
\end{align*}
with probability no less than $1-4\delta$, having taken a union bound over the two good events of interest. Dividing both sides by $\alpha_{1:T}$ yields the desired result.
\end{proof}

\begin{proof}[Proof of inequality (\ref{eqn:raoftrl_4b})]
A quick glance at the proof of Lemma \ref{lem:grad_error_highprob} shows that its statement can be easily generalized as follows: leaving the $(\alpha_t)$ used for constructing $(\hbar_t)$ as-is, we can replace the $(\alpha_t)$ weights used in the sum being bounded (in the lemma statement) by an arbitrary sequence $(a_t)$ unrelated to $(\hbar_t)$, and as long as $\exx_{[t-1]} a_t = a_t$ almost surely (for each $t$), the same concentration holds, with $(\alpha_t)$ in the bounds replaced by $(a_t)$. To apply this fact to control the sum in (\ref{eqn:raoftrl_4a}), instead of requiring that $\exx_{[t-1]}\alpha_t = \alpha_t$ almost surely, we need to go one step earlier and require $\exx_{[t-2]} \alpha_t = \alpha_t$ almost surely, for each $t \in [T]$. Then using the fact that $\alpha_t \geq 0$ for all $t$, the desired high-probability bound (\ref{eqn:raoftrl_4b}) follows easily.
\end{proof}

\subsubsection{Proofs for mirror descent helper results}

\begin{proof}[Proof of Lemma \ref{lem:helper_md_general}]
To start, note that by direct inspection one can easily verify that the strict convexity and propriety of $\Phi$ imply that $F_v(\cdot)$ is also strictly convex and proper. As such, since we are assuming that $F_v(\cdot)$ achieves its minimum on $\VV$ and $W$, strict convexity implies that the minimizers must be unique, thereby justifying the definition of $\widetilde{v}^{\ast}$ and $v^{\ast}$ given in the lemma statement.

By assumption $F_v(\cdot)$ is finite and continuous, and thus sub-differentiable on $\VV$.\footnote{\citet[Prop.~2.36]{barbu2012ConvOptBanach}.} Writing $F_v^{\ast} \defeq \inf\{F_v(u): u \in \VV\}$, recall the following global optimality characterization:\footnote{\citet[Sec.~2.2.1]{barbu2012ConvOptBanach}.}
\begin{align*}
F_v(v^{\ast}) = F_v^{\ast} \iff 0 \in \partial F_v(v^{\ast}).
\end{align*}
Since $f$ is trivially Gateaux-differentiable, it follows that $F_v(\cdot)$ is differentiable, and thus $\partial F_v(u) = \{\nabla F_v(u)\}$ for all $u \in \VV$.\footnote{Having a single sub-gradient essentially characterizes Gateaux-differentiability \citep[Prop.~2.40]{barbu2012ConvOptBanach}.} It thus follows that
\begin{align}\label{eqn:helper_md_general_1}
\nabla F_v(v^{\ast}) = \nabla f(v^{\ast}) + \nabla\Phi(v^{\ast}) - \nabla\Phi(v) = 0.
\end{align}
This observation immediately implies that for all $u,w \in \VV$ we have
\begin{align*}
\langle \nabla\Phi(v^{\ast}), u \rangle = \langle \nabla\Phi(v)-\nabla f(w), u \rangle.
\end{align*}
This holds because of the basic fact that since $f$ is a linear functional, we have $\langle \nabla f(w), u \rangle = f(u)$ for all $u,w \in \VV$.\footnote{See for example \citet[Sec.~7.2]{luenberger1969Book}.} This proves the first desired result (\ref{eqn:helper_md_general_res0}).

Moving forward with (\ref{eqn:helper_md_general_1}) in hand, note that for any $u \in \VV$, the Bregman divergence of $u$ from $v^{\ast}$ can be equivalently expressed using both $\Phi$ and $F_v$:
\begin{align}
\nonumber
\breg(u;v^{\ast}) & = \breg(u;v^{\ast}) + \langle \nabla F_v(v^{\ast}), u-v^{\ast} \rangle\\
\nonumber
& = \langle \nabla f(v^{\ast}), u-v^{\ast} \rangle + \Phi(u) - \Phi(v^{\ast}) - \langle \nabla\Phi(v), u-v^{\ast} \rangle\\
\nonumber
& = f(u) - f(v^{\ast}) + \breg(u;v) - \breg(v^{\ast};v)\\
\nonumber
& = F_v(u) - F_v(v^{\ast})\\
\label{eqn:helper_md_general_2}
& = \bregbase_{F_v}(u;v^{\ast}).
\end{align}
The first two equalities follow trivially from (\ref{eqn:helper_md_general_1}) and the definition of $\breg$. The third equality follows from the definition of $\breg$, the linearity of $f$, and the fact that $\langle \nabla f(w), u \rangle = f(u)$ for all $u,w \in \VV$, as mentioned above. The final two equalities just use the definition of $F_v$ and Bregman divergences, along with (\ref{eqn:helper_md_general_1}) again. With the identity (\ref{eqn:helper_md_general_2}) in mind, write the minimizer of this function over $W$ as
\begin{align*}
v^{\prime} \defeq \argmin_{u \in W} \breg(u;v^{\ast}).
\end{align*}
We then can readily confirm that
\begin{align*}
F_v(v^{\prime}) - F_v(v^{\ast}) & = \breg(v^{\prime};v^{\ast})\\
& \leq \breg(\widetilde{v}^{\ast};v^{\ast}) = F_v(\widetilde{v}^{\ast}) - F_v(v^{\ast}).
\end{align*}
The inequality follows from the optimality given in the definition of $v^{\prime}$, and the two equalities follow from (\ref{eqn:helper_md_general_2}) above. This clearly implies $F_v(v^{\prime}) \leq F_v(\widetilde{v}^{\ast})$, but recall that the definition of $\widetilde{v}^{\ast}$ tells us that $F_v(v^{\prime}) \geq F_v(\widetilde{v}^{\ast})$ also must hold. As such, $F_v(v^{\prime}) = F_v(\widetilde{v}^{\ast})$, and the strict convexity of $F_v$ implies that $\widetilde{v}^{\ast} = v^{\prime}$, and thus the second desired result (\ref{eqn:helper_md_general_res1}) linking up $\widetilde{v}^{\ast}$ and $v^{\ast}$ is obtained. Utilizing this new characterization of $\widetilde{v}^{\ast}$, we can immediately observe
\begin{align*}
\langle \nabla\Phi(\widetilde{v}^{\ast})-\nabla\Phi(v), \widetilde{v}^{\ast}-w \rangle & = (-1)\langle \nabla\breg(\widetilde{v}^{\ast};v), w-\widetilde{v}^{\ast} \rangle\\
& \leq 0,
\end{align*}
noting that the inequality follows from the fact that $\langle \nabla\breg(\widetilde{v}^{\ast};v), w-\widetilde{v}^{\ast} \rangle \geq 0$ for all $w \in W$, a standard optimality condition.\footnote{See \citep[Prop.~3.1.4]{bertsekas2015ConvexOpt} for Euclidean spaces, and \citet[Sec.~7.4, Thm.~2]{luenberger1969Book} for general vector spaces.} This gives us (\ref{eqn:helper_md_general_res2}) and thus concludes the proof.
\end{proof}

\begin{proof}[Proof of Lemma \ref{lem:helper_rsmd}]
First, note that the ancillary sequence $(h_t)$ defined by (\ref{eqn:rsmd_update_proximal}) is indeed well-defined, since $\HH$ is closed and bounded, and $\VV$ is a reflexive Banach space.\footnote{\citet[Thm.~2.11]{barbu2012ConvOptBanach}.} Let us now consider the closely related sequence $(h_t^{\prime})$ generated by optimization over the whole space $\VV$:
\begin{align}
\nonumber
h_t^{\prime} & \defeq \argmin_{h \in \VV} \left[ \langle \overbar{G}_t, h \rangle + \frac{1}{\beta_t} \breg(h;h_t) \right]\\
\label{eqn:helper_rsmd_1}
& = \argmin_{h \in \VV} \left[ \beta_t \langle \overbar{G}_t, h \rangle + \breg(h;h_t) \right].
\end{align}
We want to show that such a sequence $(h_t^{\prime})$ is also well-defined. To see this, note that trivial first-order optimality conditions for sub-differentiable convex functions tell us that $h_t^{\prime}$ must satisfy
\begin{align*}
0 & \in \partial\left[ \beta_t \langle \overbar{G}_t, h_t^{\prime} \rangle + \breg(h_t^{\prime};h_t) \right]\\
& = \left\{ \beta_t\overbar{G}_t + \nabla\Phi(h_t^{\prime}) - \nabla\Phi(h_t) \right\}.
\end{align*}
The equality above follows from basic convex sub-differential calculus rules and the differentiability of $\breg$.\footnote{\citet[Thm.~3.39]{penot2012CWOD}.} We can thus obtain the following chain of equivalences:
\begin{align*}
0 \in \partial\left[ \beta_t \langle \overbar{G}_t, h_t^{\prime} \rangle + \breg(h_t^{\prime};h_t) \right] & \iff \nabla\Phi(h_t) - \beta_t \overbar{G}_t = \nabla\Phi(h_t^{\prime})\\
& \iff h_t^{\prime} \in \partial\Phi^{\ast}\left( \nabla\Phi(h_t) - \beta_t \overbar{G}_t \right).
\end{align*}
Here $\Phi^{\ast}$ denotes the Fenchel conjugate of convex function $\Phi$. The second equivalence holds whenever $\VV$ is reflexive.\footnote{\citet[Thm.~2.33, Rmk.~2.35]{barbu2012ConvOptBanach}.} The $\strong$-strong convexity of $\Phi$ implies that $\Phi^{\ast}$ is differentiable.\footnote{See \citet[Lem.~15]{shalev2007PhD} for this fact. See \citet{kakade2009b} for more on the duality of strong convexity and smoothness.}  This in turn implies that the sub-differential of $\Phi^{\ast}$ at $\nabla\Phi(h_t) - \beta_t \overbar{G}_t$ contains only a single element.\footnote{\citet[Prop.~2.40]{barbu2012ConvOptBanach}.} As such, the sequence $(h_t^{\prime})$ satisfying (\ref{eqn:helper_rsmd_1}) exists and is well-defined. The dual relation between the two sequences $\nabla\Phi(h_t^{\prime}) = \nabla\Phi(h_t) - \beta_t \overbar{G}_t$ follows immediately from the previous paragraph. Furthermore, by applying Lemma \ref{lem:helper_md_general}, we obtain the desired primal projection link (\ref{eqn:rsmd_update_proj}) between the two sequences of interest.\footnote{In applying Lemma \ref{lem:helper_md_general}, first note the correspondences are $\HH \leftrightarrow W$, $h_t \leftrightarrow v$, $h_t^{\prime} \leftrightarrow v^{\ast}$, $h_{t+1} \leftrightarrow \widetilde{v}^{\ast}$, and $f(\cdot) \leftrightarrow \langle \overbar{G}_t, \cdot \rangle$. Also, note that we have already proved the minimizers are well-defined.}
\end{proof}

\bibliographystyle{../refs/apalike}
\bibliography{../refs/refs.bib}

\end{document}